\theoremstyle{plain}
\newcommand{\ASS}{ASP\xspace}
\renewcommand{\sneg}{\mathsf{\neg}}
\newcommand{\wneg}{\mathsf{\mathsf{not}}\,}
\newcommand{\commentem}[1]{
}
\begin{document}
\mainmatter              
\title{On the Relation between 
Weak Completion Semantics and Answer Set Semantics}
%
\titlerunning{On the Relation between WCS and \ASS}  
%
\author{Emmanuelle-Anna Dietz Saldanha$^1$ and Jorge Fandinno$^2$}
\authorrunning{} 
%
\tocauthor{}
\institute{
$^1$
Technische Universit\!at Dresden,
\\
$^2$ Technische Universit\!at Potsdam,
\\Germany
\\}

\maketitle              

\begin{abstract}
The Weak Completion Semantics (WCS) is a computational cognitive theory that has shown to be successful in modeling episodes 
of human reasoning. 
As the WCS is a recently developed logic programming approach, this paper investigates the correspondence of the WCS with respect to 
the well-established Answer Set Semantics (\ASS). 
The underlying three-valued logic of both semantics is different
and their models are evaluated with respect to different program transformations.
We first illustrate these differences by the formal representation of some examples of a well-known psychological experiment, the suppression task.
After that, we will provide a translation from logic programs understood under the WCS into logic programs understood under the \ASS. In particular, we will show that logic programs under the WCS can be represented as logic programs under the \ASS by means of 
a
\emph{definition completion}, where all \emph{defined} atoms in a program must be false when their definitions are false.
\keywords{Answer Set Programming, Weak Completion Semantics, Strong Negation, Human Reasoning}
\end{abstract}

\section{Introduction}
\label{sec:introduction}

The Weak Completion Semantics (WCS), originally presented in~\cite{hk:2009a},
has been suggested as a computational cognitive theory, and demonstrated to be adequate for modeling various episodes of human reasoning 
summarized in~\cite{btg:2015:h}. 
Consider a well-known psychological experiment, the suppression task~\cite{byrne:89}, which showed that participants'
answer systematically diverged from classical logic correct answers. 
Participants were asked to derive conclusions given variations of a set of premises. 
The first group was given the following two premises:\footnote{The participants received only the natural language sentences, not the abbreviations.}
\begin{align}
&\textit{If she has an essay to finish, then she will study late in the library.} \label{e:2} \tag{$\smpl$}\\
&\textit{She does not have an essay to finish.} \label{e:11} \tag{$\nlnot {\essay}$}
\end{align}
Then, they were asked what necessarily follows assuming that the 
above premises were true
and given three possible answer from where
they could choose:
\begin{align}
&\textit{She will study late in the library.} \label{l:1} \tag{$\lib$}\\
&\textit{She will not study late in the library.} \label{l:2} \tag{$\nlnot \lib$}\\
&\textit{She may or may not study late in the library.} \label{l:3} \tag{$\lib\ \mathit{ or }\ \nlnot{\lib}$}
\end{align}
54\% of the participants answered that \emph{She will not study late in the library}.
The second group received, additionally to~(\ref{e:2}) and~(\ref{e:11}), the following
premise:
\begin{align}
&\textit{If she has a textbook to read, then she will study late in the library.} \label{e:21} \tag{$\rmpl$}
\end{align}
Now, only 4\% of the participants answered that \emph{She will not study late in the library}.
With these results, Byrne showed that humans seem to reason non-monotonically, i.e.\ they suppressed previously drawn conclusions.
The above examples demonstrates that humans do not always apply the \emph{close world assumption} in their inferences.
In particular, if they are \emph{made aware} of alternatives, they might rather apply the \emph{open world assumption}.
Stenning and van Lambalgen~\cite{stenning:vanlambalgen:2008} suggested a formal representation of these premises
by licenses for inferences. For the first group, the following logic program rules were suggested:
\begin{align}
\lib \leftarrow \essay \wedge \nlnot \Ab_1
\hspace{2cm}
\ess \leftarrow \bot
\hspace{2cm}
\Ab_1 \leftarrow \bot
  \label{prg:suppression1}
\end{align}
$\Ab_1$ is an abnormality predicate. $\Ab_1 \leftarrow \bot$ and $e \leftarrow \bot$ are (negative) \emph{assumptions} 
which assume $e$ and $\Ab_1$ being false.\footnote{The rules will be understood under their \emph{weak completion}.}
In the designated model under the WCS~\cite{hk:2009a}, which is the least model of the weak completion of that program 
under the three-valued {\L}ukasiewicz logic~\cite{lukasiewicz:20}, $e$, $\lib$ and $\Ab_1$ are false.
On the other hand, the logic program rules for the second group was suggested to be as follows:
\begin{align}
\lib \leftarrow \essay \wedge \nlnot \Ab_1
\hspace{0.6cm}
\lib \leftarrow t \wedge \nlnot \Ab_2
\hspace{0.6cm}
\ess \leftarrow \bot
\hspace{0.6cm}
\Ab_1 \leftarrow \bot
\hspace{0.6cm}
\Ab_2 \leftarrow \bot
  \label{prg:suppression2}
\end{align}
Here, in the designated model under the WCS, $e$, $\Ab_1$ and $\Ab_2$ are false, and $\lib$ and $t$ \emph{unknown}.
The differences between both models, where in the first case $\lib$ is false, and in the second case $\lib$
is unknown, seems to represent well the \textit{suppression effect} occurring in the second group: In
the first group, 46\% concluded that \emph{She will not study late in the library},
whereas in the second group, only 4\% concluded that \emph{She will not study late in the library}.
The overall results of all the twelve cases of the suppression tasks seem to be adequately modeled under
the WCS~\cite{cogsci:2012}.

In this paper we will investigate how the above two cases of the suppression task can be modeled under the Answer Set Semantics~\cite{gelfond:lifschitz:91} (\ASS), in particular how both semantics correspond to each other.
For this purpose, we first introduce the notions and notation used throughout the paper and the underlying three-valued logics.
Section~\ref{sec:logicprogramming} introduces \ASS and WCS and shows some intermediate results.
The main result is presented in Section~\ref{sec:correspondence}, where the formal correspondence between both semantics is shown,
and the above two cases will be discussed again.

\section{Preliminaries}
\label{sec:preliminaries}

In this section, we present the general notation and terminology that will
be used throughout the paper together with the semantics for \emph{classical logic with strong negation}~\cite{vakarelov1977notes} and \emph{three-valued {\L}ukasiewicz logic}~\cite{lukasiewicz:20}.
In the sequel, definitions are specified in the running text, except if we intend to emphasize them.

\subsection{Syntax}
\label{sec:preliminaries.syntax}

We assume a fixed non-empty and (possibly infinite) set of ground atoms, denoted by $\Atoms$.
The set of (strongly) negated atoms for the atoms in $S \subseteq \Atoms$, is defined as $\sneg S \eqdef \{ \sneg A \mid A \in S \}$. 
A \emph{literal} $L$ is either an atom or its (strong) negation, 
that is \mbox{$L \in (\Atoms \ \cup \sneg \Atoms)$}.
\commentem{why is there a need to understand truth constants as literals? we will have to treat them as literals throughout the paper then, e.g. concerning the
definitions for $\Head(r)$ and $\Body(r)$, and possibly somewhere else...}
Given a set of atoms $\Atoms$, a \emph{formula} is defined according to the following grammar:
\[
\varphi ::= 
A \ \mid \ \bot
  \ \mid \ \top
  \ \mid \ \udf
  \ \mid \ \varphi \circ \psi
  \ \mid \ \sneg \varphi
  \ \mid \ \wneg\varphi
\]
$\top$,~$\bot$ and~$\udf$ denote the truth constants \textit{true}, \textit{false}
and \textit{unknown}, respectively.
The connective~``$\wneg$'' stands for \emph{weak} or \emph{default negation}, whereas ``$\sneg$'' stands for \emph{strong negation}. 
The
connectives~``$\impliesN$'', ``$\impliesL$'' and ``$\leftarrow$'' stand for classical (or material) implication, {\L}ukasiewicz implication and logic programming implication,
respectively. The logic programming implication sign $\leftarrow$ is purely syntactic and,
different to $\impliesN$ and $\impliesL$, will not be assigned a fixed underlying semantics. We will study two different logic programming semantics and depending on the semantics in consideration,
the meaning for $\leftarrow$ is then specified accordingly.
We use~$\leftrightarrow_{\text{\rm \tiny X}}$ as an abbreviation defined by
\begin{gather}
\varphi \leftrightarrow_{\text{\rm \tiny X}} \psi  \ \ \eqdef  \ \ (\varphi \leftarrow_{\text{\rm \tiny X}} \psi) \wedge (\psi\leftarrow_{\text{\rm \tiny X}} \varphi),
\end{gather}
where $\text{X} \in \{ \text{C}, \text{\L} \}$. 
A formula~$\varphi$ is \emph{regular} if 
its only occurrences of implications 
strong negation~$\sneg$ only occurs in front of atoms.
A formula~$\varphi$ is called \emph{implication-free} if there are no occurrences of the implication connectives $\impliesN$, $\impliesL$ or $\leftarrow$,
i.e.\ its set of connectives is $\{\wedge,\vee, \sneg, \wneg\}$.
A formula~$\varphi$ is called \emph{basic} if it is implication-free, and in addiction, it has no occurrences of weak negation, i.e. its only connective are~$\set{\wedge,\vee,\sneg\,}$.
Basic formulas are implication-free formulas, but not vice versa.
Note that, in general, regular formulas 
that are implication-free need not to be basic nor basic formulas need to be regular.

\begin{example}
Consider the following three formulas:
\begin{center}
$\varphi_{\nr{implicationfree}} = (\sneg (\sneg p \wedge q) \vee \wneg r)$\quad\quad\quad\quad
$\varphi_{\nr{basic}} = (\sneg(\sneg p \wedge q))$\quad\quad\quad\quad
$\varphi_{\nr{regular}} = (\wneg(\sneg p \wedge q))$
\end{center}
$\varphi_{\ref{implicationfree}}$, $\varphi_{\ref{basic}}$
and $\varphi_{\ref{regular}}$ are implication-free formulas, but
$\varphi_{\ref{implicationfree}}$ is neither basic nor regular.
$\varphi_{\ref{basic}}$ is basic, but not regular,
whereas $\varphi_{\ref{regular}}$ is regular but not basic.
\end{example}

\subsection{Three-valued Semantics}\label{sub:threevaluedsem}
 
\begin{table}[t]  
\[
\begin{array}[c]{@{\hspace{0mm}}c@{\hspace{2mm}}c}
F & \wneg F\\ \midrule
\top & \bot \smallskip \\
\udf & \top \\
\bot & \top \smallskip 
\end{array}
\quad\quad\quad\quad
\begin{array}{c@{\hspace{2mm}}lll}
 \wedge & \top & \udf & \bot \smallskip \\
\midrule
\top & \top & \udf & \bot \smallskip \\
\udf & \udf & \udf & \bot \smallskip \\
\bot & \bot & \bot & \bot 
 \end{array}
\quad\quad\quad\quad
\begin{array}{c@{\hspace{2mm}}lll} 
\impliesN & \top & \udf & \bot \smallskip \\
\midrule
\top & \top & \top & \top \smallskip \\
\udf & \udf & \top & \top \smallskip \\
\bot & \bot & \top & \top 
 \end{array}
\quad\quad\quad\quad
\begin{array}{c@{\hspace{2mm}}lll} 
\dimpliesN & \top & \udf & \bot \smallskip \\
\midrule
\top & \top & \udf & \bot \smallskip \\
\udf & \udf & \top & \top \smallskip \\
\bot & \bot & \top & \top 
 \end{array}
\]
 \[
\begin{array}[c]{@{\hspace{0mm}}c@{\hspace{2mm}}c}
F & \sneg F\\ \midrule
\top & \bot \smallskip \\
\udf & \udf \\
\bot & \top \smallskip 
\end{array}
\quad\quad\quad\quad
\begin{array}{c@{\hspace{2mm}}lll}
 \vee & \top & \udf & \bot \smallskip \\
\midrule
\top & \top & \top & \top \smallskip \\
\udf & \top & \udf & \udf \smallskip \\
\bot & \top & \udf & \bot 
 \end{array}
\quad\quad\quad\quad
\begin{array}{c@{\hspace{2mm}}lll} 
\impliesL & \top & \udf & \bot \smallskip \\
\midrule
\top & \top & \top & \top \smallskip \\
\udf & \udf & \top & \top \smallskip \\
\bot & \bot & \udf & \top 
 \end{array}
\quad\quad\quad\quad
\begin{array}{c@{\hspace{2mm}}ccc}
\leftrightL & \top & \udf & \bot \smallskip  \\
\midrule
\top & \top & \udf & \bot \smallskip \\
\udf & \udf & \top & \udf \smallskip \\
\bot & \bot & \udf & \top 
 \end{array}
 \]
\caption{
Truth tables for three-valued {\L}ukasiewicz logic $\{ \sneg, \wedge, \vee, \impliesL, \leftrightL\}$,
and Classical Logic extended with strong negation $\{ \sneg, \wneg, \wedge, \vee, \impliesN, \dimpliesN \}$.
\label{tab:3vld}}
\vspace{-0.5cm}
\end{table}
A (three-valued) interpretation 
 $I : \Atoms \longrightarrow \{ \top, \bot, \udf\}$ is a function mapping each atom to a truth constant.
We introduce now two different alternative representations of interpretations that are equivalent and common in the literature:
An interpretation~$I$ can be represented as a pair of set of atoms, 
$I = \tuple{I^\top, I^\bot}$ such that $I^\top \cap I^\bot = \emptyset$
where the correspondence is given as follows:
$I^\top = \{ A \mid I(A) = \top \}$ and $I^\bot = \{ A \mid I(A) = \bot \}$.
Note that $A \notin (I^\top \cup I^\bot)$ holds iff $I(A) = \udf$.
Alternatively, $I$ can be represented as a set of literals $I^\top \cup \sneg I^\bot$.
The first representation is usual in the context of the WCS while the second is usual in the context of Answer Set Programming (ASP)~\cite{Baral:2003}.
We will use them interchangeably.

Three-valued interpretations can be ordered either by \emph{knowledge} or by \emph{truth}: Given two interpretations \mbox{$I=\tuple{I^\top,I^\bot}$}
and \mbox{$J=\tuple{J^\top,J^\bot}$},
we say that $I$ \emph{contains less knowledge} than $J$,
in symbols $I \subseteq_k J$,
iff
$I^\top \subseteq J^\top$ and $I^\bot \subseteq J^\bot$
iff $(I^\top \cup \sneg I^\bot) \subseteq (J^\top \cup \sneg J^\bot)$.
In other words, $I$ and $J$ agree in all atoms which are known in $I$, but $I$ can have more unknown atoms.
On the other hand, when the \emph{truth order} is applied, i.e.\ $\bot \leq \udf \leq \top$, then, given two interpretations \mbox{$I=\tuple{I^\top,I^\bot}$}
and \mbox{$J=\tuple{J^\top,J^\bot}$}, we say that $I$ \emph{contains less truth} than $J$,
in symbols $I \subseteq_t J$,
iff
$I^\top \subseteq J^\top$ and $J^\bot \subseteq I^\bot$
iff $(I^\top \cup \sneg J^\bot) \subseteq (J^\top \cup \sneg I^\bot)$.
As we are only interested in the knowledge ordering, we will omit the subscript $k$ in the following, and simply write $I \subseteq J$ when we refer to $I \subseteq_k J$.

In this paper, we will consider two different three-valued logics, so we introduce some general definitions parametrized by the logic.
Given a (three-valued) logic~$\CalL$, a three-valued interpretation~$I$ \emph{satisfies} a formula~$\varphi$, in symbols $I \modelsLogic \varphi$, iff
$I$ evaluates $\varphi$ as true, that is $I(\varphi) = \top$.
Furthermore, $I$ is called a \emph{(\mbox{three-valued}) model} of a theory 
$\Gamma$ (where $\Gamma$ is a set of formulas) under $\CalL$, denoted by $I \modelsLogic \Gamma$,
iff $I \modelsLogic \varphi$ for all $\varphi \in \Gamma$.
$I$ is a \emph{$\subseteq$-minimal model} of $\Gamma$ iff for no other model $J$ of $\Gamma$,
$J \subset I$ (ordered according to the knowledge).
$I$ is the \emph{$\subseteq$-least model} of $\Gamma$ iff it is the unique
minimal model of $\Gamma$.
A formula $\varphi$ is \emph{valid} in $\CalL$, denoted by $\modelsLogic \varphi$,
 iff $I \modelsLogic \varphi$ for every interpretation~$I$. 
Furthermore, we write $\Gamma \modelsLogic \varphi$ iff 
$I \modelsLogic \Gamma$ implies $I \modelsLogic \varphi$ for every interpretation~$I$.
For theories $\Gamma$ and $\Gamma'$,
we write $\Gamma \equivLogic \Gamma'$ iff 
every interpretation~$I$
satisfies: $I \modelsLogic \Gamma$ iff $I \modelsLogic \Gamma'$.
We will omit the brackets~$\{$ and~$\}$, in case $\equivLogic$ is applied to formulas, i.e.\
we write $\varphi \equivLogic \varphi'$ iff 
$\set{\varphi} \equivLogic \set{\varphi'}$.


\subsection{Classical Logic extended with strong Negation (\Nlogic)}
\label{sec:preliminaries.nlogic}

The distinction between strong and weak negation was first noticed by Nelson~\cite{nelson:1949}
in the context of Intuitionistic Logic and later studied by Vakarelov~\cite{vakarelov1977notes} in the context of Classical Logic.
The syntax is obtained from the syntax of Classical Logic extended with a the connective ``$\sneg\,$'' standing for strong negation, that is, formulas are built from the set of connectives~$\{ \wedge, \vee, \impliesN, \wneg, \sneg\, \}$.
Note that here classical negation is dentoed by ``$\wneg\!$''.
We call them the \Nformulas.
Accordingly, a \Ntheory is a set of \Nformulas.
Evaluation of its connectives, $\wedge, \vee, \impliesN, \wneg$ and $\sneg$, is given by the corresponding truth tables in Table~\ref{tab:3vld}.  In the sequel, we refer to \Nlogic if \Nformulas or \Ntheories are considered and evaluated with respect to these truth tables.
Note that \Nlogic is a conservative extension of classical logic in the sense that, if we restrict ourselves to formulas without strong negation (formulas without the $\sneg$ connective), the valid formulas in \Nlogic and Classical logic are the same.
We use $\modelsN$ and $\equivN$ to denote entailment and equivalence according to \Nlogic.
Weak negation~``$\wneg$'' can be defined in terms of ``$\impliesN$'' by the following equivalence:
\begin{align}
\wneg\varphi \ \ \equivN \ \ \bot \impliesN \varphi
    \label{vorob:impliesneg}
\end{align}
We will consider weak negation here as connective in its own right because of its importance for logic programming.

It is interesting to note that, every (possibly non-regular) \Nformula can be rewritten as as an equivalent regular \Nformula applying the following equivalences
taken from Vorob'ev calculus (see Section 2.1 in~\cite{pearce:1997} for more details):
\begin{align}
\sneg\sneg \varphi & \equivN \varphi & 
	\label{vorob:atom}\\
\sneg\wneg\varphi &\equivN \varphi
\label{vorob:neg} \\
\sneg\,(\varphi \wedge \psi) &\equivN \sneg\varphi \vee \sneg\psi
\label{vorob:wedge}\\
\sneg\,(\varphi \vee \psi) &\equivN \sneg\varphi \wedge \sneg\psi  \label{vorob:vee}\\
\sneg\,(\varphi \impliesN \psi) & \equivN \sneg\varphi \wedge \psi  \label{vorob:negimplies}
\end{align}
For any \Nformula~$\varphi$, we write $\reg{\varphi}$ for the regular formula obtained from $\varphi$ by applying the above equivalences.
For a theory $\Gamma$, by $\reg{\Gamma} \eqdef \setm{ \reg{\varphi} }{ \varphi \in \Gamma }$ we denote the regular theory obtained in the same way.

\subsection{Three-valued {\L}ukasiewicz Logic (\Llogic)}
\label{sec:preliminaries.llogic}

The syntax of the three-valued {\L}ukasiewicz logic introduced in~\cite{lukasiewicz:20} is restricted to the set of connectives $\{ \bot, \wedge, \vee, \impliesL, \sneg\, \}$,
that is, by replacing in Classical Logic connectives ``$\impliesN$'' and ``$\wneg$'' by ``$\impliesL$'' and ``$\sneg\,$'', respectively.
We call formulas build from these connectives \Lformulas.
Accordingly, a \Ltheory is a set of \Lformulas.
Evaluation of its connectives, $\bot, \wedge, \vee, \impliesL$ and $\sneg$,
is given by the corresponding truth tables in Table~\ref{tab:3vld}. In the sequel, we refer to the \Llogic, if \Lformulas or a \Ltheory are considered and evaluated with respect to these truth tables.
We use $\modelsL$ and $\equivL$ to denote entailment and equivalence according to \Llogic.

\section{Logic Programming}\label{sec:logicprogramming}

\commentem{In the following definitions i have some need for clarification for the following terms:
\begin{enumerate}
\item a basic rule, is never a basic formula. Right?
\item a rule is regular iff its body and head are regular: however regular 
formulas can have implication, i.e. $p \leftarrow q \leftarrow q \leftarrow s$ is 
a regular rule? head and body can according to Vorob'ev calculus be transformed into 
implication-free formulas, but only if the implication sign is classical, i.e. shouldn't we restrict
regular formulas to be formulas, that, only contain the 
$\impliesN$ implication but not the $\leftarrow$ implication, right? i.e. the above formula
should then be rewritten as $p \impliesN q \leftarrow q \impliesN s$
or $p \leftarrow q \impliesN q \impliesN s$ and so we know exactly what is head and what is body.
\item how to write facts in logic programs? I.e. $p$ to denote that $p$ is true
and $\sneg p$ to denote that $p$ is false. However this does not correspond to the specification for
rules in logic programs which need to have the form $\varphi \leftarrow \psi$. Further, how how these
 facts evaluated according to the definition for closed interpretations? (i know how they are meant to be evaluated, however
 we never discuss this formally in the text)
\end{enumerate}
}
A rule $r$ is an expression of the form:
\begin{align}
\varphi \leftarrow \psi  \label{eq:rule}
\end{align}
where
$\varphi$ and $\psi$ are \mbox{implication-free} formulas respectively called the \emph{head} the \emph{body} of the rule and $L$ is a literal. $\Head(r)$ and $\Body(r)$ denote the set of literals that occur in the head and the body of the rule $r$, respectively.
A rule is \emph{basic} (resp. \emph{regular}) iff its body and head are basic (resp. regular).
A \emph{normal nested rule} iff is a rule whose head $\varphi \in (\Atoms \cup \neg\Atoms)$ is a literal, 
an \emph{extended rule} is a normal nested rules such that its body is either $\top$, (this rule is called a positive fact), or a conjunction of literals or literals preced by weak negation.
A \emph{normal rule} is an extended nested rules such that its head is a positive literal.

A \emph{(logic) program}~$\CalP$ is a set of rules
and, for any program~$\CalP$, by $\atoms(\CalP)$, we denote the set of all
atoms occurring in program~$\CalP$.
If it is clear from the context,
then we assume that $\Atoms = \atoms(\CalP)$.
We also denote by
$\Head(\CalP)= \{ L \mid L \in \Head(r) \mbox{ and } r \in \CalP \}$
and
$\Body(\CalP)= \{ L \mid L \in \Body(r) \mbox{ and } r \in \CalP \}$
the set of literals occurring in the head and the body of a program~$\CalP$, respectively.

A program is called \emph{basic} (resp. \emph{regular}, \emph{normal}, \emph{extended} or \emph{normal nested}) iff all its rules are basic (resp. regular, normal, extended or normal nested).
In case that $\psi = \top$, we will usually write just $\varphi$ instead of $\varphi \leftarrow \top$.
We will use $\varphi \leftrightarrow \psi$ as an abbreviation for a pair of rules
$\varphi \leftarrow \psi$ and $\psi \leftarrow \varphi$.
For instance, the program $\set{ a \leftrightarrow b }$ is an abbreviation for the program
$\set{ a \leftarrow b , b \leftarrow a }$.


\subsection{Answer Set Semantics}

We review now and extend the definition of Answer Set Semantics from~\cite{LifschitzTT99} to possibly non-regular programs.
The definition of the Answer Set Semantics to non-regular programs was first introduced by Pearce~\cite{pearce:1997} using an equilibrium condition over the models of the logic of \mbox{here-and-there} with strong negation.
The formulation we present here is equivalent, but using classical logic with strong negation and a reduct instead.

An interpretation~$I$ is said to be \emph{closed} under some regular program~$\CalP$
iff every 
\mbox{$(\varphi \leftarrow\psi) \in \CalP$} satisfies that
\mbox{$I \modelsN \varphi$}
whenever
\mbox{$I \modelsN \psi$} holds.
The \emph{reduct} of a regular formula~$\varphi$ with respect to an interpretation $I$, in symbols~$\varphi^I$, is recursively defined as follows:
\begin{align}
\varphi^I &\eqdef \varphi \text{ if $\varphi$ is a literal}
\\
(\varphi \wedge \psi)^I &\eqdef \varphi^I \wedge \psi^I
\\
(\varphi \vee \psi)^I &\eqdef \varphi^I \vee \psi^I
\\
(\wneg\varphi)^I &\eqdef \begin{cases}
\bot &\text{if } I \modelsN \varphi^I
\\
\top &\text{otherwise}
\end{cases}
\end{align}
The reduct of a rule $(\varphi \leftarrow \psi)^I \eqdef \varphi^I \leftarrow \psi^I$ is obtained by applying the reduct to its head and body.
The reduct of a logic program is obtained by applying the reduct to all its rules,
that is \ $\CalP^I \eqdef \setm{ r^I }{ r \in \CalP }$.
Furthermore, as done for \Ntheories, we also assign to any program $\CalP$ an equivalent regular program 
$\reg{\CalP} \eqdef \setm{ \reg{r} }{ r \in \CalP }$
with
$\reg{\varphi \leftarrow \psi} \eqdef \reg{\varphi} \leftarrow \reg{\psi}$ for any rule of the form
\mbox{$r = (\varphi \leftarrow \psi)$}.
 For any \Nformula~$\varphi$, we obtain the regular formula~$\reg{\varphi}$ by applying the equivalences (\ref{vorob:atom}-\ref{vorob:vee})
specified in Section~\ref{sec:preliminaries.nlogic}.
Then, answer sets are defined in terms of the regular counterpart of any program.
\commentem{The reduct of a program is also a program, right?}

\begin{definition}\label{def:answer-set}
Given an interpretation $I$ and a program $\CalP$, $I$ is an \emph{answer set} of~$\CalP$ iff $I$ is an
\mbox{$\subseteq$-minimal} closed interpretation under $\reg{\CalP}^I$.
\end{definition}

It is easy to see that, for regular programs, Definition~\ref{def:answer-set} precisely coincide with the definition of answer set from~\cite{LifschitzTT99}.

%

It is well-known that every answer set of a logic program without strong negation is also a  model in classical logic of the propositional theory obtained by replacing the logic programming implication~$\leftarrow$ by classical implication~$\impliesN$.
\commentem{reference}
We extend this result to the case of logic programs with strong negation by replacing classical logic by its extension with strong negation.
Formally, given a logic program~$\CalP$, by $\NwoP(\CalP)$ we denote the \Ntheory resulting of replacing in $\CalP$ each occurrence of~$\leftarrow$ by~$\impliesN$.

\begin{proposition}\label{prop:closedNmodel}
Given an interpretation~$I$ and a program~$\CalP$,
$I$ is closed under~$\CalP$ if and only if $I$ is a  model of $\NwoP(\CalP)$ under \Nlogic.
\end{proposition}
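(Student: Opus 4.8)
The plan is to reduce the statement to a rule-by-rule truth-table check. First I would unfold the definitions involved. By the definition of a model of an \Ntheory, $I \modelsN \NwoP(\CalP)$ holds iff $I \modelsN (\varphi \impliesN \psi)$ for every rule $(\varphi \leftarrow \psi) \in \CalP$, i.e.\ iff $I(\varphi \impliesN \psi) = \top$ for each such rule. On the other hand, by definition $I$ is closed under $\CalP$ iff for every rule $(\varphi \leftarrow \psi) \in \CalP$ we have $I \modelsN \varphi$ whenever $I \modelsN \psi$, i.e.\ $I(\psi) = \top$ implies $I(\varphi) = \top$. Hence it suffices to establish, for arbitrary implication-free formulas $\varphi$ and $\psi$, the equivalence
\[
I(\varphi \impliesN \psi) = \top \quad\Longleftrightarrow\quad \big(\, I(\psi) = \top \text{ implies } I(\varphi) = \top \,\big).
\]

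For this I would simply read off the truth table for $\impliesN$ in Table~\ref{tab:3vld}, with the first argument indexing the rows and the second the columns: $a \impliesN b$ equals $\top$ in every entry except the two in which $b = \top$ and $a \in \set{\udf,\bot}$. Equivalently, $a \impliesN b = \top$ iff $a = \top$ or $b \neq \top$, which is exactly the propositional statement ``$b = \top$ implies $a = \top$''. Since the semantics evaluates the top-level connective of $\varphi \impliesN \psi$ by feeding in $a := I(\varphi)$ and $b := I(\psi)$, instantiating yields the displayed equivalence; chaining it with the two unfoldings above then gives both directions of the proposition.

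Finally I would add two brief remarks. The argument uses no regularity assumption on $\CalP$ and no structural property of the head and body beyond their being implication-free: in $\NwoP(\CalP)$ the connective $\impliesN$ occurs only at the outermost position of each formula, and $I(\varphi \impliesN \psi)$ depends on $\varphi$ and $\psi$ solely through the truth values $I(\varphi)$ and $I(\psi)$. Also, one must keep the orientation straight --- $\impliesN$ is the classical ``if'', so the formula $\varphi \impliesN \psi$ obtained from the rule $\varphi \leftarrow \psi$ still reads ``the head $\varphi$ holds if the body $\psi$ holds'', which is precisely the closedness condition. There is essentially no obstacle here; the only point requiring care is to unfold all three notions ($\modelsN$, model of a theory, and closedness) faithfully and not to transpose the implication.
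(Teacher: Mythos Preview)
Your proposal is correct and follows essentially the same approach as the paper's proof: both unfold the definitions of closedness and of being a model of $\NwoP(\CalP)$, then reduce the equivalence to the observation from the truth table that $I(\varphi \impliesN \psi)=\top$ iff $I(\varphi)=\top$ or $I(\psi)\neq\top$. Your added remarks on orientation and on the implication occurring only at the outermost level are accurate but not needed for the argument.
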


\begin{proof}
Note that $I$ is closed under~$\CalP$ iff all (rules) $\varphi \leftarrow \psi\in\CalP$ satisfy
\mbox{$I \modelsN \varphi$} whenever \mbox{$I \modelsN \psi$}
iff
all $\varphi \leftarrow \psi\in\CalP$ satisfy
either
\mbox{$I(\varphi)=\top$}  or \mbox{$I(\psi) \neq \top$}
iff
all $\varphi \leftarrow \psi\in\CalP$ satisfy
\mbox{$I(\varphi \impliesN \psi)=\top$}
iff
all formulas of the form $\varphi \impliesN \psi\in\NwoP(\CalP)$ satisfy
\mbox{$I(\varphi \impliesN \psi)=\top$}
iff $I$ is a model of $\NwoP(\CalP)$ under \Nlogic.
\end{proof}

\commentem{should the following corollary not rather be a proposition?}

\begin{corollary}\label{cor:closedNmodel}
Given an interpretation~$I$ and a regular program~$\CalP$,
$I$ is an answer set of~${\CalP}$ if and only if $I$ is a  $\subseteq$-minimal model of 
$\NwoP(\CalP^I)$ under \Nlogic.
\end{corollary}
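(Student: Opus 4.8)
The plan is to derive Corollary~\ref{cor:closedNmodel} directly from Definition~\ref{def:answer-set} and Proposition~\ref{prop:closedNmodel}, with the only wrinkle being that the definition speaks about $\reg{\CalP}^I$ whereas the statement involves $\NwoP(\CalP^I)$. First I would unfold the definition: $I$ is an answer set of $\CalP$ iff $I$ is a $\subseteq$-minimal closed interpretation under $\reg{\CalP}^I$. Since $\CalP$ is regular, $\reg{\CalP} = \CalP$, so this reads: $I$ is a $\subseteq$-minimal closed interpretation under $\CalP^I$. Note that $\CalP^I$ is again a (regular) program, so Proposition~\ref{prop:closedNmodel} applies to it: $I$ is closed under $\CalP^I$ iff $I$ is a model of $\NwoP(\CalP^I)$ under \Nlogic.

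The next step is to argue that minimality transfers. The class of interpretations closed under $\CalP^I$ coincides, as a set, with the class of \Nlogic-models of $\NwoP(\CalP^I)$ by Proposition~\ref{prop:closedNmodel}. Hence an interpretation is $\subseteq$-minimal in the first class iff it is $\subseteq$-minimal in the second class, because $\subseteq$-minimality of an element depends only on the underlying set of candidates and the fixed ordering $\subseteq$. Therefore $I$ is a $\subseteq$-minimal closed interpretation under $\CalP^I$ iff $I$ is a $\subseteq$-minimal model of $\NwoP(\CalP^I)$ under \Nlogic, which is exactly the claimed equivalence.

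One subtlety I would address explicitly: $\NwoP(\CalP^I)$ should be understood as replacing $\leftarrow$ by $\impliesN$ in the program $\CalP^I$, and for this to be well-defined we need $\CalP^I$ to be a genuine program, i.e.\ that the reduct of a regular rule is still an (implication-free, head/body) rule. This follows from the recursive definition of the reduct: it only rewrites subformulas of the form $\wneg\varphi$ into $\bot$ or $\top$ and leaves $\wedge$, $\vee$ and literals untouched, so the head and body of $r^I$ remain implication-free; hence $\CalP^I$ is a (regular, in fact $\wneg$-free) program and $\NwoP(\CalP^I)$ makes sense. I would also remark that since $\CalP$ is regular, $\reg{\CalP^I} = \CalP^I$ up to the trivial normalization, so no further regularization is needed inside the reduct.

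I do not anticipate a real obstacle here; the corollary is essentially a one-line consequence of Proposition~\ref{prop:closedNmodel} together with the observation that regularity makes $\reg{\CalP} = \CalP$. The only thing to be a little careful about is not to conflate ``$\subseteq$-minimal among closed interpretations under $\CalP^I$'' with ``$\subseteq$-minimal model of $\NwoP(\CalP)$ under \Nlogic'' (without the reduct), which would be false in general; the reduct must stay inside. Writing the argument as a chain of ``iff''s parallel to the proof of Proposition~\ref{prop:closedNmodel} should make the whole thing transparent and short.
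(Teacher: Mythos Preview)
Your proposal is correct and follows essentially the same route as the paper's proof: unfold Definition~\ref{def:answer-set}, use that $\reg{\CalP}=\CalP$ for regular~$\CalP$, and apply Proposition~\ref{prop:closedNmodel} to the reduct~$\CalP^I$. You are in fact more explicit than the paper about why $\subseteq$-minimality transfers (the two classes of interpretations coincide) and about why $\CalP^I$ is again a regular program; the paper instead appends a remark that $\NwoP(\reg{\CalP'}) = \reg{\NwoP(\CalP')} \equivN \NwoP(\CalP')$ for arbitrary~$\CalP'$, which is not needed for the corollary as stated.
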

\begin{proof}
From Definition~\ref{def:answer-set} and Proposition~\ref{prop:closedNmodel},
it follows that
$I$ is an answer set of~${\CalP}$ if and only if $I$ is a $\subseteq$-minimal model of 
$\NwoP({\CalP}^I)$ under \Nlogic.
Note that, since $\CalP$ is regular $\reg{\CalP} = \CalP$.
Furthermore, we also have $\NwoP(\reg{\CalP'}) = \reg{\NwoP(\CalP')} \equivN \NwoP(\CalP')$ for every program~$\CalP'$.
Hence, the statement holds.
\end{proof}
\commentem{About the last sentence of the proof above: This doesn't belong there, right? we do not discuss every program, but in the statement
we restrict already to regular programs. i we need the extension to any program, i would rather add a corollary for every program following from this proposition, i.e.
\begin{corollary}\label{cor:closedNmodel:forevery}
Given an interpretation~$I$ and a program~$\CalP$,
$I$ is an answer set of~${\CalP}$ if and only if $I$ is a  $\subseteq$-minimal model of 
$\NwoP(\CalP^I)$ under \Nlogic.
\end{corollary}
\begin{proof}
This follows immediately from Proposition~\ref{cor:closedNmodel} and 
that $\NwoP(\reg{\CalP'}) = \reg{\NwoP(\CalP')} \equivN \NwoP(\CalP')$ for every program~$\CalP'$. 
\end{proof}
}



\begin{proposition}\label{prop:answer.set->closedNmodel.basic}
Given an interpretation~$I$ and a regular program~$\CalP$,
if $I$ is an answer set of~${\CalP}$,
then $I$ is a model of $\NwoP({\CalP})$ under \Nlogic.
\end{proposition}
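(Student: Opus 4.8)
The plan is to reduce the claim to the statement that $I$ is closed under the reduct $\CalP^I$, and then transfer closedness back from $\CalP^I$ to $\CalP$ by means of an auxiliary lemma: for every regular formula $\varphi$ and every interpretation $I$ one has $I \modelsN \varphi$ if and only if $I \modelsN \varphi^I$. Granting this lemma, the argument is immediate: since $\CalP$ is regular, $\reg{\CalP}=\CalP$, so by Definition~\ref{def:answer-set} together with Proposition~\ref{prop:closedNmodel} (equivalently, by Corollary~\ref{cor:closedNmodel}), being an answer set of $\CalP$ implies, in particular, that $I$ is a model of $\NwoP(\CalP^I)$ under \Nlogic, hence that $I$ is closed under $\CalP^I$. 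For any rule $(\varphi \leftarrow \psi)\in\CalP$, its reduct $(\varphi^I \leftarrow \psi^I)$ lies in $\CalP^I$, so $I \modelsN \psi^I$ entails $I \modelsN \varphi^I$; the lemma rewrites this as: $I \modelsN \psi$ entails $I \modelsN \varphi$. Thus $I$ is closed under $\CalP$, and one last application of Proposition~\ref{prop:closedNmodel} yields that $I$ is a model of $\NwoP(\CalP)$ under \Nlogic.

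It remains to prove the lemma, which I would do by structural induction on the regular formula $\varphi$. Because $\varphi$ is regular, strong negation occurs only in front of atoms, so the cases to consider are exactly: $\varphi$ a literal or truth constant, $\varphi = \varphi_1 \wedge \varphi_2$, $\varphi = \varphi_1 \vee \varphi_2$, and $\varphi = \wneg\varphi_1$; in particular this guarantees that the recursive clauses defining $(\cdot)^I$ are exhaustive on the formulas in question. The literal/constant case is trivial since $\varphi^I = \varphi$. The conjunction and disjunction cases follow from the induction hypothesis and the clauses $(\varphi_1 \wedge \varphi_2)^I = \varphi_1^I \wedge \varphi_2^I$ and $(\varphi_1 \vee \varphi_2)^I = \varphi_1^I \vee \varphi_2^I$, using that in the truth tables of $\wedge$ and $\vee$ the value $\top$ arises exactly as in the classical two-valued case. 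For $\varphi = \wneg\varphi_1$: by definition $(\wneg\varphi_1)^I$ is $\bot$ when $I \modelsN \varphi_1^I$ and $\top$ otherwise; by the induction hypothesis $I \modelsN \varphi_1^I$ iff $I(\varphi_1) = \top$; and the truth table for $\wneg$ gives $I(\wneg\varphi_1) = \bot$ precisely when $I(\varphi_1) = \top$ and $I(\wneg\varphi_1) = \top$ otherwise. Comparing the two, $I \modelsN \wneg\varphi_1$ iff $I \modelsN (\wneg\varphi_1)^I$, closing the induction.

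The only real obstacle is the lemma, and within it the bookkeeping around regularity: one must check that regularity rules out the syntactic shapes for which the reduct is not defined (strong negation in front of compound formulas, or in front of weak negation), so that the four-case induction is genuinely exhaustive. Once that is settled, everything else is a routine unwinding of Definition~\ref{def:answer-set}, Proposition~\ref{prop:closedNmodel} and Corollary~\ref{cor:closedNmodel}. Note that the converse of the proposition fails in general (a model of $\NwoP(\CalP)$ need not satisfy the minimality condition), so only the stated implication is claimed.
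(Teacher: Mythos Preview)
Your proposal is correct and follows essentially the same approach as the paper's own proof. The paper gives only a sketch, pointing out that the argument ``can be carried out by induction in the structure of the formula by noting that $\varphi^I$ is just the result of partially evaluating subformulas for the form of $\wneg\psi$ with respect to $I$''; your auxiliary lemma ($I \modelsN \varphi$ iff $I \modelsN \varphi^I$) makes this induction explicit, and your use of Proposition~\ref{prop:closedNmodel} to pass between closedness and being a model of $\NwoP(\cdot)$ matches the paper's route through Corollary~\ref{cor:closedNmodel}.
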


\commentem{
Could we show an intermediate result that 
\begin{center}
 $I$ is a model of $\tranN{\CalP}$ under \Nlogic if and only if $I$ is a model of $\tranN{\CalP^I}$ under \Nlogic
\end{center}
?
If yes, then we could simplify the proof as follows:
\begin{proof}
Assume that $I$ is an answer set of $\CalP$. According to Proposition~\ref{cor:closedNmodel}
$I$ is a  $\subseteq$-minimal model of 
$\tranN(\CalP^I)$ under \Nlogic, thus $I$ is a model of $\tranN(\CalP^I)$ under \Nlogic.
According to `the intermediate result above', $I$ is a model of $\tranN(\CalP)$ under \Nlogic.
\end{proof}
}

\begin{proof}[Proof (sketch)]
The proof can be carried out by induction in the structure of the formula by noting that $\varphi^I$ is just the result of partially evaluating subformulas for the form of $\wneg\psi$ with respect to $I$.
Hence,
from Proposition~\ref{prop:closedNmodel},
it follows
$I$ being an answer set of $\CalP$ implies that
$I$ is a ($\subseteq$-minimal) model of~$\tranN{\reg{\CalP}^I}$.
Take any formula of the form of $\varphi \impliesN \psi$ in $\CalP$.
Then, the formula $\reg{\varphi}^I \impliesN \reg{\psi}^I$ belongs to~$\tranN{\reg{\CalP}^I}$
and, since $I$ is a model of $\tranN{\reg{\CalP}^I}$,
we get that $I(\reg{\varphi}^I \impliesN \reg{\psi}^I) = \top$.
This implies that either $I(\reg{\varphi}^I) = \top$ or $I(\reg{\psi}^I) \neq \top$, which in its turn implies that
either $I(\varphi) = \top$ or $I(\psi) \neq \top$
holds.
Consequently, $I(\varphi \impliesN \psi) = \top$
and $I$ is a model of $\tranN{\CalP}$.
\end{proof}

As may be expected and as the following example shows, the other direction of Proposition~\ref{prop:answer.set->closedNmodel.basic} does not hold.
\begin{example}
 $\tuple{\emptyset,\emptyset} = \emptyset$ is the unique answer set of
$\CalP = \{ \sneg p \leftarrow q\}$
while its corresponding \Ntheory
\mbox{$\NP = \set { \sneg p \impliesN q }$} has several other models, as for instance $\tuple{\set{p},\set{q}} = \set{p,\neg q}$.
\end{example}

It is also well known that normal nested programs without weak nor strong negation (usually called \emph{positive}) have a unique answer set which coincides with the
\mbox{$\subseteq$-minimal} classical model (usually called the $\subseteq$-least model) of the corresponding propositional theory (see Proposition~3 in~\cite{Baral:2003}).
Similarly, normal nested programs without weak negation (i.e. basic) have at most one \mbox{$\subseteq$-minimal} model (and possibly no model) in classical logic with strong negation.
For instance, in the case of $\set{a,\sneg a}$ is inconsistent, so it has no $\subseteq$-least model,

\begin{proposition}\label{prop:unique.answer.set<->closedleastNmodel}
Given any basic normal nested program~$\CalP$,
one of the following two statements holds:\footnote{Recall, basic programs only consist 
of rules, whose head and body are implication-free and have no occurences
of weak negation.}
\begin{enumerate}
\item $\CalP$ has a unique answer set which is also the $\subseteq$-least model of~$\tranN{\CalP}$, or
\item $\CalP$ has no answer set and $\tranN{\CalP}$ has no model.
\end{enumerate}
\end{proposition}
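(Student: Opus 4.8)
The plan is to first observe that, for basic programs, answer sets are exactly the $\subseteq$-minimal $\Nlogic$-models of $\tranN{\CalP}$, and then to show that such a set of models, when nonempty, has a least element. For Step~1, note that since $\CalP$ is basic it contains no weak negation, so the reduct is inert: $\reg{\CalP}^I = \reg{\CalP}$ for every $I$. Hence, by Definition~\ref{def:answer-set}, $I$ is an answer set of $\CalP$ iff $I$ is a $\subseteq$-minimal interpretation closed under the regular program $\reg{\CalP}$; by Proposition~\ref{prop:closedNmodel} this is the same as being a $\subseteq$-minimal model of $\tranN{\reg{\CalP}}$ under $\Nlogic$; and since $\tranN{\reg{\CalP}} \equivN \tranN{\CalP}$ (as noted in the proof of Corollary~\ref{cor:closedNmodel}), it is the same as being a $\subseteq$-minimal model of $\tranN{\CalP}$. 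So it suffices to prove that the set $\mathcal{M}$ of $\Nlogic$-models of $\tranN{\CalP}$ has a $\subseteq$-least element whenever $\mathcal{M} \neq \emptyset$; if $\mathcal{M} = \emptyset$ there is trivially no $\subseteq$-minimal model and hence no answer set, which is case~2.

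For Step~2, assume $\mathcal{M} \neq \emptyset$ and let $M$ be the interpretation whose literal set is $\bigcap_{I\in\mathcal{M}}(I^\top \cup \sneg I^\bot)$, equivalently $M^\top = \bigcap_{I\in\mathcal{M}} I^\top$ and $M^\bot = \bigcap_{I\in\mathcal{M}} I^\bot$. Since every $I$ satisfies $I^\top \cap I^\bot = \emptyset$, also $M^\top \cap M^\bot = \emptyset$, so $M$ is a genuine interpretation, and $M \subseteq I$ for all $I \in \mathcal{M}$ by construction. I would then check $M \modelsN \tranN{\CalP}$ by working with $\reg{\CalP}$, which has the same $\Nlogic$-models: because $\CalP$ is normal nested, every rule of $\reg{\CalP}$ has the form $L \leftarrow \chi$ with $L$ a literal, and because $\CalP$ is basic, $\chi$ is built only from $\wedge$, $\vee$ and literals. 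A routine induction on $\chi$ shows that the property ``$J(\chi) = \top$'' is preserved when $J$ grows in the $\subseteq$ order. So if $M(\chi) = \top$, then $I(\chi) = \top$ for every $I \in \mathcal{M}$, hence $I(L) = \top$, i.e. $L \in I^\top \cup \sneg I^\bot$, for every such $I$; therefore $L$ lies in the intersection, i.e. $M(L) = \top$. Thus $M$ satisfies every rule of $\reg{\CalP}$, so $M \modelsN \tranN{\CalP}$, and $M$ is the $\subseteq$-least model of $\tranN{\CalP}$.

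For Step~3: if $\mathcal{M} \neq \emptyset$, then the least model $M$ is the unique $\subseteq$-minimal model of $\tranN{\CalP}$, hence by Step~1 the unique answer set of $\CalP$, and it equals the $\subseteq$-least model of $\tranN{\CalP}$ — this is case~1. If $\mathcal{M} = \emptyset$, then $\tranN{\CalP}$ has no model, hence no $\subseteq$-minimal model, hence $\CalP$ has no answer set — case~2. Since the two cases are exhaustive, the proposition follows.

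The routine ingredients are the two inductions (inertness of the reduct when there is no weak negation, and upward preservation of ``$J(\chi)=\top$'' for basic bodies). The one step that requires real care — and that explains why both hypotheses are needed — is showing that $M$ is a model in Step~2: \emph{basic} is what makes the bodies monotone in the knowledge ordering (weak negation would destroy this), while \emph{normal nested} forces heads to be single literals, which is exactly what lets ``$M(L)=\top$ for the head'' reduce to membership in the intersection of the literal sets (a disjunctive head, as in $\set{p \vee q}$, would kill the least-model property). Strong negation itself is harmless: in the literal-set representation, intersecting interpretations is just intersecting literal sets, and a negated literal $\sneg A$ is treated there uniformly with an atom.
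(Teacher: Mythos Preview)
Your proof is correct and takes a genuinely different route from the paper's. The paper argues by contradiction: it assumes two distinct $\subseteq$-minimal models $I_1,I_2$, builds a strictly smaller interpretation $J$ by keeping only the literals on which $I_1$ and $I_2$ agree, observes that $J$ cannot be a model, and then uses the same monotonicity-of-basic-bodies fact you isolate to derive a contradiction. You instead construct the least model directly as the literal-wise intersection of \emph{all} models and verify it is itself a model, again via monotonicity of basic bodies in the knowledge order.

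Both arguments hinge on the same key ingredient (upward preservation of $J(\chi)=\top$ for bodies built from $\wedge,\vee$ over literals) and both exploit normal-nestedness to reduce satisfaction of a rule to ``head literal lies in the literal set''. Your direct-intersection approach is slightly more robust: the paper's proof invokes finiteness of $\Atoms$ to guarantee existence of minimal models before arguing uniqueness, whereas your intersection argument yields the least model outright with no cardinality assumption. Conversely, the paper's version makes the role of the two hypotheses visible in a single contradiction step rather than spread across the intersection construction. Your final paragraph explaining why \emph{basic} and \emph{normal nested} are each indispensable (with the $\set{p \vee q}$ counterexample for disjunctive heads) is a nice addition that the paper does not spell out.
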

\begin{proof}
First, note that if $\tranN{\CalP}$ has no model, from Proposition~\ref{prop:answer.set->closedNmodel.basic}, we immediately get that $\CalP$ has no answer set.
Let us show now that if $\tranN{\CalP}$ has a model, then it has a $\subseteq$-least model.
Obviously, since $\Atoms$ is finite, if $\tranN{\CalP}$ has a model, then it has at least some $\subseteq$-minimal model.
Suppose, for the sake of contradiction, that $\tranN{\CalP}$ has two different $\subseteq$-minimal models $I_1$ and $I_2$.
Then, there are literals $L_1$ and $L_2$ such that $I_1(L_1) = I_2(L_2) = \top$ and
$I_1(L_2) = I_2(L_1) = \bot$.
Let $J$ be an interpretation such that $J(L) = I_1(L)$ if $I_1(L) = I_2(L)$, and $J(L) = \udf$, otherwise.
Then, we have that $J \subset I_1$ and $J \subset I_2$ so $J \not\modelsN \tranN{\CalP}$.
Hence, there is a formula in $\tranN{\CalP}$ is of the form $L \impliesN \varphi$ with $L$ a literal and $\varphi$ a basic formula
such that $J(L) \neq \top$ and $J(\varphi) = \top$.
Furthermore, it can be shown by induction in the structure of the formula that, for every basic formula $\varphi$ and pair of interpretations $J \subseteq I$, we have that $J(\varphi) = \top$ implies $I(\varphi)=\top$.
Hence, we have that $I_1(\varphi)=\top$ and $I_2(\varphi)=\top$ which, since $I_1$ and $I_2$ are models of $\tranN{\CalP}$, implies that $I_1(L)=I_2(L) = \top$.
Hence, by construction we have that $J(L) =\top$,
which is a contradiction with the fact that $J(L) \neq \top$.
Consequently, there is a unique $\subseteq$-minimal model~$I$.
Finally, note that, since $\CalP$ has no weak negation, $\CalP^I=\CalP$ and, thus, $I$ is also the unique answer set of $\CalP$.
\end{proof}

\subsection{Weak Completion Semantics}

Formally, given a program~$\CalP$, by $\tranL{\CalP}$ we denote the \Ltheory resulting of replacing in $\CalP$ each occurrence of~$\leftarrow$ by~$\impliesL$.
Furthermore, for a normal nested program~$\CalP$,
we say that a literal $L$ is \emph{defined in $\CalP$} iff $\CalP$ contains a rule 
whose head is $L$; otherwise we say that $L$ is \textit{undefined in $\CalP$}.
The set of rules defining a literal $L$ (those with $L$ in the head) is denoted as $\Def(\CalP,L)$.
The set of all literals
that are undefined in~$\CalP$ is denoted by~$\ud(\CalP)$.
We specify the set of defined literals in $\CalP$ as $\Head(\CalP) = (\Atoms \cup \sneg \Atoms) \setminus \ud(\CalP)$.
The set $\Pdisj{\CalP}$ of a normal nested program~$\CalP$, is defined as follows:
\[\begin{array}{lll}
\Pdisj{\CalP}
\ \ \eqdef\ \{  L \leftarrow \varphi \vee \dots \vee \varphi_n \mid & 
L \in (\Atoms \cup \sneg\Atoms) \mbox{ and } \\ &
\Def(\CalP,L) = 
\set{ L \leftarrow \varphi, \dotsc, L \leftarrow \varphi_n } \not\emptyset \}.
\end{array}
\]
Below we straightforwardly extend the definition of the \emph{weak completion}~\cite{hk:2009a} to 
normal nested programs, that is, to programs 
that may contain rules where the head is a strongly negated literal.
The \emph{weak completion} of a normal nested program~$\CalP$,
denoted $\WComp{\CalP}$,
is defined as follows:
\[\begin{array}{lll}
\WComp{\CalP}
\ \ \eqdef\ \{  L \leftrightarrow \varphi \mid & 
 (L \leftarrow \varphi) \in \Pdisj{\CalP} \}
\end{array}
\]
Note that $\WComp{\CalP}$ is also a normal nested program because we consider that $\varphi \leftrightarrow \psi$ in a program is a shorthand for the two rules $\varphi \leftarrow \psi$ and $\psi \leftarrow \varphi$.

\begin{definition}
An interpretation~$I$ is called \emph{\wcmodel} of a normal nested program~$\cP$ iff $I$ is a $\subseteq$-minimal model of $\tranL{\WComp{\CalP}}$.
\end{definition}

Originally WCS was only defined for basic normal programs, extended with rules of the form $A \leftarrow \bot$, called (negative) \emph{assumption}~\cite{hk:2009a}. Here, we will call these programs, \wcsmpl programs.
Hence, we are only considering programs with one type of negation, which we will show, corresponds to strong negation in the \ASS.
Note that, as opposed to the \ASS, the WCS is defined in terms of the 
three-valued \mbox{\L ukasiewicz} logic instead of classical logic with strong negation.

\cite{hk:2009a} showed that \wcsmpl programs always have a unique \wcmodel which can be computed by the following consequence operator~\cite{stenning:vanlambalgen:2008}:\footnote{
The correspondence of this unique \wcmodel and the well-founded model~\cite{gelder:ross:schlipf:1991} for \wcsmpl programs without positive cycles, has been shown in~\cite{dhw:2014}.}
Given an interpretation~$I$ and a \wcsmpl program~$\CalP$,
the application of~$\Svl$ to $I$ and $\CalP$,
denoted by $\Svlp(I)$, is an interpretation $J = \langle J^\top, J^\bot \rangle$ defined as follows:
 \[\begin{array}{@{\hspace{0mm}}l@{\hspace{1mm}}c@{\hspace{1mm}}l@{\hspace{1mm}}l}    
J^\true & = & \{A \mid & \text{there is }A \leftarrow \Body \in \CalP
\text{ such that } 
I(\Body) = \true \}, \\
J^\bot & = & \{A \mid & \text{there is } A \leftarrow \Body \in  \CalP
\text{ and }
\\&&&  
\text{all } A \leftarrow \Body \in \CalP

\text{ satisfy } I(\Body) = \bot \}. 
   \end{array}
\]

The following example illustrates the WCS by means of two cases of Byrne's suppression task from the introduction.

\begin{example}\label{ex:wcs}
Let $\newprogram\label{prg:sup1}$ be the \wcsmpl program consisting of the rules in~\eqref{prg:suppression1} in
the introduction.
$\tranL{\WComp{\program\ref{prg:sup1}}}$ is the following \Ltheory:
\begin{gather*}
\lib \dimpliesL \essay \wedge \sneg \Ab_1
\hspace{2cm}
\ess \dimpliesL \bot
\hspace{2cm}
\Ab_1 \dimpliesL \bot
  \label{prg:suppression1.ltheory}
\end{gather*}
whose unique \wcmodel is
$\tuple{\emptyset,\set{\ess,\lib, \Ab_1}} = \set{ \sneg \ess, \sneg \lib, \sneg \Ab_1 }$.
This program illustrates why assumptions such as $\ess \leftarrow \bot$,
though being tautologies in \mbox{\L ukasiewicz} logic, are not tautologies under the WCS:
After the the weak completion transformation they become equivalences,
$\ess \dimpliesL \bot$, and, thus, $\ess$ has to be false.
Note that assumptions can also be overwritten by facts.
Let for instance 
$\newprogram\label{prg:sup1b} = \program\ref{prg:sup1} \cup \set{ \ess \leftarrow \top }$
be the program obtained by adding the fact $e$ to the above program.
Then, its weak completion  $\tranL{\WComp{\program\ref{prg:sup1b}}}$ is as follows:
\begin{gather*}
\lib \dimpliesL \essay \wedge \sneg \Ab_1
\hspace{2cm}
\ess \dimpliesL \bot \vee \top
\hspace{2cm}
\Ab_1 \dimpliesL \bot
\end{gather*}
As $\ess \dimpliesL \bot \vee \top \equivL \ess \dimpliesL \top$, the unique \wcmodel
of $\CalP_{\ref{prg:sup1b}}$ is $\tuple{\set{\ess, \lib},\set{\Ab_1}} = \set{ \ess, \lib, \sneg \Ab_1 }$, where $\ess$ and $\lib$ are true.
Let $\newprogram\label{prg:sup2}$ be the \wcsmpl program consisting of the rules in~\eqref{prg:suppression2} in the introduction.
$\tranL{\WComp{\program\ref{prg:sup2}}}$ is the following \Ltheory:
\begin{gather*}
\lib \dimpliesL (\essay \wedge \sneg \Ab_1) \vee (t \wedge \sneg \Ab_2)
\hspace{0.8cm}
\ess \dimpliesL \bot
\hspace{0.8cm}
\Ab_1 \dimpliesL \bot
\hspace{0.8cm}
\Ab_2 \dimpliesL \bot
  \label{prg:suppression2.ltheory}
\end{gather*}
whose unique \wcmodel is
$\tuple{\emptyset,\set{\ess,\Ab_1,\Ab_2}} = \set{ \sneg \ess, \sneg \Ab_1, \sneg \Ab_2 }$.
That is, $e$, $\Ab_1$ and $\Ab_2$ are false, while $\lib$ and $t$ are unknown.
\end{example}


\section{Correspondence between \ASS and WCS}\label{sec:correspondence}

Let us first discuss the main differences between both semantics according to the two examples of the suppression task in the introduction.

\begin{examplecont}{ex:wcs}\label{ex:wcs->asp}
Consider the \wcsmpl program $\CalP_{\ref{prg:sup1}}$: Its corresponding normal program can be obtained by replacing every assumption of the form~$A \leftarrow \bot$ 
in $\CalP_{\ref{prg:sup1}}$ by a fact with strong negation~$\sneg A$.
The resulting program, $\newprogram\label{prg:sup1.asp}$ consists of the following rules:
\begin{gather}
\lib \leftarrow \essay \wedge \sneg \Ab_1
\hspace{2cm}
\sneg \ess
\hspace{2cm}
\sneg \Ab_1 
  \label{prg:suppression1.asp}
\end{gather}
Its unique answer set is
$\tuple{\emptyset,\set{\ess,\Ab_1}} = \set{ \sneg \ess, \sneg \Ab_1}$,
which does not coincide with the \wcmodel of $\CalP_{\ref{prg:sup1}}$,
as $\lib$ is false under the WCS, but unknown under the \ASS.

\end{examplecont}
The above example illustrates that replacing assumptions by strong negation facts is not enough to obtain the same results between WCS and \ASS.

As mentioned previously the \ASS and WCS can be respectively defined in terms of classical logic with strong negation and 
three-valued {\L}ukasiewicz logic.
Interestingly, 
Vakarelov~\cite{vakarelov1977notes} showed that
there is a correspondence between \Llogic and \Nlogic, in the sense that all connectives of one logic are definable in 
terms of the other one.
In particular, here we are interested in translating from the WCS to the \ASS and, thus, that implies a translation from \Llogic to \Nlogic.
Formally, given a \Ltheory $\Gamma$, by $\GammaI{\Gamma}$ we denote the result of
replacing in~$\Gamma$
every occurrence of $\varphi \impliesL \psi$ by $(\varphi \impliesN \psi) \wedge (\sneg\psi \impliesN \sneg\varphi)$.

\begin{theorem}[Theorem~11 in~\protect\cite{vakarelov1977notes}]
\label{th:equivLlogicNlogic.vakarelov}\smallskip
Given any \Ltheory $\Gamma$,
an interpretation~$I$ is a model of $\Gamma$ under \Llogic iff $I$ is a model of $\tranN{\Gamma}$ under \Nlogic. 
\end{theorem}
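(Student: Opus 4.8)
The plan is to reduce the claim to a purely local check on the implication connective, since every other connective of \Llogic ($\wedge$, $\vee$, $\sneg$, $\bot$) also belongs to \Nlogic and is evaluated by exactly the same truth tables in Table~\ref{tab:3vld}. First I would make precise that $\GammaI{\cdot}$ acts homomorphically: it leaves atoms, $\bot$, $\wedge$, $\vee$, and $\sneg$ untouched and rewrites each subformula $\varphi \impliesL \psi$ as $(\varphi \impliesN \psi) \wedge (\sneg\psi \impliesN \sneg\varphi)$, applied recursively to all nested occurrences. Thus it suffices to prove, for an arbitrary interpretation~$I$ and arbitrary \Lformulas $\varphi,\psi$,
\[
I(\varphi \impliesL \psi) \ = \ I\big((\varphi \impliesN \psi) \wedge (\sneg\psi \impliesN \sneg\varphi)\big),
\]
and then lift this to full formulas by a routine structural induction (the inductive hypothesis handles the subformulas $\varphi$, $\psi$, whose $\GammaI{\cdot}$-images have the same \Nlogic-value as their \Llogic-value, and the displayed identity handles the outermost implication), and finally lift from a single formula to a theory $\Gamma$ by noting $I \modelsL \Gamma$ iff $I \modelsL \chi$ for all $\chi \in \Gamma$ iff $I \modelsN \tranN{\chi}$ for all $\chi$ iff $I \modelsN \tranN{\Gamma}$.

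The core step is therefore the $3 \times 3$ truth-table verification of the displayed identity. Writing $a = I(\varphi)$ and $b = I(\psi)$, I would check all nine combinations of $a,b \in \{\top,\udf,\bot\}$ using the tables for $\impliesL$, $\impliesN$, $\sneg$ and $\wedge$. Concretely, $I(\varphi \impliesN \psi) = \top$ exactly when $a = \top$ implies $b = \top$ is false in the weaker sense recorded by the $\impliesN$ table (it is $\top$ unless $a=\top$ and $b \neq \top$, in which case it takes value $b$), and $\sneg$ swaps $\top \leftrightarrow \bot$ and fixes $\udf$; one then reads off that the conjunction of the two classical-style implications reproduces exactly the \Lukasiewicz column: it is $\top$ when $a \le b$ in the truth order, $\udf$ in the single case $(a,b) = (\top,\udf)$ and $(\udf,\bot)$, and $\bot$ when $(a,b)=(\top,\bot)$. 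This matches the $\impliesL$ table verbatim.

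The one genuine subtlety — and what I expect to be the only real obstacle — is making sure the structural induction is stated for the right syntactic class, namely \Lformulas, so that $\GammaI{\cdot}$ never has to rewrite anything other than $\impliesL$; since \Lformulas are built only from $\{\bot,\wedge,\vee,\impliesL,\sneg\}$, this is clean, but one must be careful that after rewriting the inner implications the formula still lies in the \Nlogic fragment (it does, because $(\varphi\impliesN\psi)\wedge(\sneg\psi\impliesN\sneg\varphi)$ uses only $\{\wedge,\impliesN,\sneg\}$ plus whatever connectives already occurred in $\varphi,\psi$). Everything else is bookkeeping. I would present the nine-case table compactly and then state the induction in one line, since the inductive step for $\wedge$, $\vee$, $\sneg$ is immediate from the shared truth tables and the step for $\impliesL$ is exactly the displayed identity together with the inductive hypothesis applied to $\varphi$ and $\psi$.
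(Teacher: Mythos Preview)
The paper does not give its own proof of this statement: it is quoted as Theorem~11 of Vakarelov and left without argument. Your proposal therefore supplies a proof where the paper offers none, and the route you take---structural induction over \Lformulas, with the only nontrivial case being the nine-entry truth-table check that
\[
I(\varphi \impliesL \psi) \;=\; I\big((\varphi \impliesN \psi) \wedge (\sneg\psi \impliesN \sneg\varphi)\big)
\]
---is the standard one and is correct.

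One small presentational point: your verbal summary of the $\impliesL$ table (``$\top$ when $a \le b$'', with the non-$\top$ entries at $(a,b)\in\{(\top,\udf),(\udf,\bot),(\top,\bot)\}$) treats $a=I(\varphi)$ as the antecedent and $b=I(\psi)$ as the consequent. In the paper's convention the arrows point right-to-left (head on the left, body on the right, mirroring $\leftarrow$), so in $\varphi \impliesL \psi$ it is $\psi$ that plays the antecedent role. The displayed identity holds under either consistent reading of the two arrows, so this does not affect correctness, but you should align the prose with the orientation of Table~\ref{tab:3vld} when writing it up.
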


Based on this result, we can establish the correspondence between the \ASS and the WCS.

%
%
We need rules that \textit{negatively complete} the 
information of the given program.
Let us now formalize this idea by defining the definition completion of a program.

\begin{definition}
Given a normal nested program~$\CalP$, its \emph{definition completion} is defined as follows:
\begin{eqnarray}
  \nComp{\CalP} \ \ &\eqdef& \ \ \CalP \cup 
  \{ \sneg L \leftarrow  \sneg \varphi \mid (L \leftarrow \varphi) \in \Pdisj{\CalP}
  \}
 \end{eqnarray}
\end{definition}

%

Let us apply the suggested characterization
for the programs in Example~\ref{ex:wcs->asp}.

\begin{examplecont}{ex:wcs->asp}\label{ex:wcs->asp2}
Given~\program\ref{prg:sup1.asp}, its definition completion is as follows:
\begin{gather*}
\nComp{\program\ref{prg:sup1.asp}} = \CalP_{\ref{prg:sup1.asp}} \cup \{ 
\hspace{0.25cm}
\sneg \lib \!\leftarrow\! \sneg\, (\essay \wedge \sneg \Ab_1),
\hspace{0.25cm}
\sneg\sneg \ess \!\leftarrow\! \sneg\top,
\hspace{0.25cm}
\sneg\sneg \Ab_1 \!\leftarrow\! \sneg\top
\hspace{0.25cm} \}
\end{gather*}
Note that $\sneg \lib \!\leftarrow\! (\sneg\, \essay \wedge \sneg \Ab_1)$ is equivalent to 
$\sneg \lib \leftarrow \sneg\essay \vee \Ab_1$,
while the last two rules are tautologies under the \ASS.
The unique answer set of
$\nComp{\program\ref{prg:sup1.asp}}$
corresponds to the unique \wcmodel of~$\program\ref{prg:sup1}$.

\end{examplecont}

\subsection{Characterization of WCS in Terms of \ASS}

We will now introduce some auxiliary results that will help us 
to show the correspondence between WCS and \ASS.
Let us start by showing that the answer sets of any program coincide with the answer sets of its weak completion.
The proof of this statement relies on the the following lemma which is a straightforward lifting of the Completion Lemma from~\cite[p. 23]{FerrarisL05} to the class of programs with strong negation.

\begin{lemma}\label{lemma:reg:completion}
Let~$\CalP$ be any program, let $\Atoms$ be any set of atoms (not necessarily equal to $\atoms(\CalP)$)
and let \mbox{$Q \subseteq (\Atoms \cup \sneg \Atoms)$} be any set of literals such that $Q \cap \Head(\CalP) = \emptyset$.
Let $\varphi_L$ be some implication-free formula for each literal $L \in Q$ and $I$ be an interpretation.
Then,
the following two statements are equivalent:
\begin{enumerate}
 \item $I$ is an answer set of $\CalP \cup \setm{ L \leftarrow \varphi_L}{ L \in Q }$.
 \item $I$ is an answer set of  $\CalP \cup \setm{ L \leftrightarrow \varphi_L}{ L \in Q }$.
\end{enumerate}
\end{lemma}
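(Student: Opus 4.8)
The plan is to follow the structure of the Completion Lemma proof from \cite{FerrarisL05}, adapting each step to accommodate strong negation by working with the regular counterparts of the programs and using the characterization of answer sets via $\subseteq$-minimal models of $\NwoP(\cdot)^I$ established in Corollary~\ref{cor:closedNmodel} (extended to arbitrary, not necessarily regular, programs as noted in the comments). First I would fix an interpretation $I$ and reduce everything to the regular, reduced setting: by definition, $I$ is an answer set of a program $\CalQ$ iff $I$ is a $\subseteq$-minimal closed interpretation under $\reg{\CalQ}^I$, equivalently a $\subseteq$-minimal model of $\NwoP(\reg{\CalQ}^I)$ under \Nlogic. Applying this to both $\CalQ_1 = \CalP \cup \setm{ L \leftarrow \varphi_L}{ L \in Q }$ and $\CalQ_2 = \CalP \cup \setm{ L \leftrightarrow \varphi_L}{ L \in Q }$, the two statements become assertions about $\subseteq$-minimal models in \Nlogic of $\NwoP(\reg{\CalQ_1}^I)$ and $\NwoP(\reg{\CalQ_2}^I)$ respectively. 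The key observation is that these two \Ntheories differ only in the extra "converse" implications $\varphi_L \impliesN L$ (after regularization and reduct), one for each $L \in Q$.

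Next I would prove the two directions. For the direction from statement~2 to statement~1: if $I$ is a $\subseteq$-minimal model of $\NwoP(\reg{\CalQ_2}^I)$, then $I$ is certainly a model of the subtheory $\NwoP(\reg{\CalQ_1}^I)$, and any model $J \subseteq I$ of $\NwoP(\reg{\CalQ_1}^I)$ is also a model of $\NwoP(\reg{\CalQ_2}^I)$ --- here is where the hypothesis $Q \cap \Head(\CalP) = \emptyset$ is used, together with the fact that for $L \in Q$ the extra equivalence rule forces $I(L) = I(\varphi_L)$ in any $\subseteq$-minimal model of $\NwoP(\reg{\CalQ_1}^I)$, since $L$ is not in the head of any other rule, so the only way $I$ can make $L$ true minimally is if $\varphi_L$ already forces it. More precisely, I would argue that in any $\subseteq$-minimal model $J$ of $\NwoP(\reg{\CalQ_1}^I)$ we have $J(L) = J(\reg{\varphi_L}^I)$ for each $L \in Q$: the implication $L \impliesN \reg{\varphi_L}^I$ gives one direction of the knowledge comparison, and minimality together with $L \notin \Head(\CalP)$ gives that $L$ cannot be "extra" true or false beyond what $\varphi_L$ dictates. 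This yields that $J$ also satisfies the converse implications, so $J \models \NwoP(\reg{\CalQ_2}^I)$, contradicting minimality of $I$ unless $J = I$; hence $I$ is $\subseteq$-minimal for $\NwoP(\reg{\CalQ_1}^I)$ too. For the direction from statement~1 to statement~2: if $I$ is a $\subseteq$-minimal model of $\NwoP(\reg{\CalQ_1}^I)$, the same argument shows $I$ itself satisfies $I(L) = I(\reg{\varphi_L}^I)$, hence $I \models \NwoP(\reg{\CalQ_2}^I)$; and since $\NwoP(\reg{\CalQ_2}^I) \supseteq \NwoP(\reg{\CalQ_1}^I)$, minimality of $I$ for the larger theory is inherited from minimality for the smaller one.

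One subtlety I need to handle carefully is that the reduct and regularization operations interact with the $\leftrightarrow$ abbreviation: since $L \leftrightarrow \varphi_L$ is shorthand for the pair $L \leftarrow \varphi_L$ and $\varphi_L \leftarrow L$, I must check that $\reg{\cdot}$ and $(\cdot)^I$ distribute over this pair as expected, and that $\reg{L} = L$ since $L$ is a literal, so the added implication in $\NwoP$ is literally $\reg{\varphi_L}^I \impliesN L$. I also need the auxiliary monotonicity fact --- implicitly used in Proposition~\ref{prop:unique.answer.set<->closedleastNmodel} --- that for implication-free (and in particular, reduced) formulas $\psi$ and interpretations $J \subseteq I$, $J(\psi) = \top$ implies $I(\psi) = \top$; this is needed to transfer truth of $\varphi_L$ between $J$ and $I$. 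The main obstacle I anticipate is pinning down precisely the claim "$J(L) = J(\reg{\varphi_L}^I)$ in every $\subseteq$-minimal model $J$" when strong negation is present, because $Q$ may contain both an atom and its strong negation, and I must ensure consistency of $J$ (the pair $\tuple{J^\top,J^\bot}$ must remain disjoint) is not violated by the completion --- this is exactly where $Q \cap \Head(\CalP) = \emptyset$ together with $Q$ being an arbitrary set of literals (not forced to be a "complementary" set) keeps the argument clean, essentially because the completed program is still normal nested and its $\NwoP$-translation only adds implications whose heads are in $Q$.
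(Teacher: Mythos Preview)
The paper does not actually prove this lemma: it is stated without proof as ``a straightforward lifting of the Completion Lemma from~\cite[p.~23]{FerrarisL05} to the class of programs with strong negation.'' Your proposal is precisely that lifting, so you are doing exactly what the paper intends, only in more detail than the paper itself provides.

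The overall structure is correct, but one step should be tightened. In the direction $2 \Rightarrow 1$ you first assert that ``any model $J \subseteq I$ of $\NwoP(\reg{\CalQ_1}^I)$ is also a model of $\NwoP(\reg{\CalQ_2}^I)$''. That is false in general: an arbitrary $J$ below $I$ need not satisfy the extra implications $\reg{\varphi_L}^I \impliesN L$. You then self-correct by restricting to $\subseteq$-minimal $J$, and that \emph{is} the right move --- make it the primary argument rather than a parenthetical. Concretely: given a proper submodel $J \subsetneq I$ of $\NwoP(\reg{\CalQ_1}^I)$, pass to a $\subseteq$-minimal such $J$; then your ``remove $L$'' argument (if $J(L) = \top$ but $J(\reg{\varphi_L}^I) \neq \top$, set $L$ to $\udf$; since $L \notin \Head(\CalP)$ no other rule has head $L$, and monotonicity of basic regular formulas under $\subseteq_k$ handles the remaining rule bodies) shows $J$ already satisfies the converse implications, contradicting minimality of $I$ for $\CalQ_2$. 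The direction $1 \Rightarrow 2$ is clean as you describe it.

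Your remarks about regularization and the reduct distributing over the $\leftrightarrow$ abbreviation, and about monotonicity of basic regular formulas, are exactly the technical ingredients needed. The worry about $Q$ containing both $A$ and $\sneg A$ is not a real obstacle: the ``remove $L$'' step only turns a literal from true to unknown, so disjointness of $J^\top$ and $J^\bot$ is preserved automatically.
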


\begin{proposition}\label{prop:answersets.wc}
Given any normal nested program $\CalP$, an interpretation~$I$ is an answer set of $\CalP$ if and only if $I$ is an answer set of $\WComp{\CalP}$.
\end{proposition}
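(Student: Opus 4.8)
The plan is to route $\WComp{\CalP}$ through the intermediate program $\Pdisj{\CalP}$ and to obtain the harder half from the Completion Lemma (Lemma~\ref{lemma:reg:completion}). Concretely, I would establish (a) that $\CalP$ and $\Pdisj{\CalP}$ have the same answer sets and (b) that $\Pdisj{\CalP}$ and $\WComp{\CalP}$ have the same answer sets; the statement then follows by transitivity.

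For~(a), recall that $\Pdisj{\CalP}$ is obtained from $\CalP$ by collapsing, for each defined literal~$L$, the rules $\set{L \leftarrow \chi_1, \dots, L \leftarrow \chi_n}$ into the single rule $L \leftarrow \chi_1 \vee \dots \vee \chi_n$. I would first check the bookkeeping: the reduct and the regularization both leave the (literal) heads of normal nested rules untouched and commute with~$\vee$, so $\reg{\Pdisj{\CalP}}^I = \Pdisj{\reg{\CalP}^I}$ for every interpretation~$I$. Second, I would observe that for any normal nested program~$\CalR$ and any interpretation~$J$, $J$ is closed under~$\CalR$ iff $J$ is closed under~$\Pdisj{\CalR}$: since in \Nlogic a disjunction is satisfied exactly when one of its disjuncts is, the requirement ``$J \modelsN L$ whenever $J \modelsN \chi_i$, for all~$i$'' is the very same condition as ``$J \modelsN L$ whenever $J \modelsN \chi_1 \vee \dots \vee \chi_n$''. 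Applying this to $\CalR = \reg{\CalP}^I$ and using the commutation identity, $\reg{\CalP}^I$ and $\reg{\Pdisj{\CalP}}^I$ have exactly the same closed interpretations, hence the same $\subseteq$-minimal ones, so by Definition~\ref{def:answer-set} $\CalP$ and $\Pdisj{\CalP}$ have the same answer sets.

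For~(b), I would write $\Pdisj{\CalP} = \setm{L \leftarrow \varphi_L}{L \in Q}$, where $Q$ is the set of literals defined in~$\CalP$ (equivalently $\Head(\Pdisj{\CalP})$) and $\varphi_L$ is the unique, implication-free body of the rule of $\Pdisj{\CalP}$ with head~$L$; then, by the definition of the weak completion, $\WComp{\CalP} = \setm{L \leftrightarrow \varphi_L}{L \in Q}$. I would apply Lemma~\ref{lemma:reg:completion} with the empty program in the role of ``$\CalP$'' and with $Q$ as the set of literals: the hypothesis $Q \cap \Head(\emptyset) = \emptyset$ is vacuous and each $\varphi_L$ is implication-free, so the lemma yields that $\emptyset \cup \setm{L \leftarrow \varphi_L}{L \in Q} = \Pdisj{\CalP}$ and $\emptyset \cup \setm{L \leftrightarrow \varphi_L}{L \in Q} = \WComp{\CalP}$ have the same answer sets. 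Together with~(a) this proves the proposition.

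The one point that needs genuine care is the setup of~(b): Lemma~\ref{lemma:reg:completion} only allows adding definitions for literals \emph{disjoint} from the heads of the fixed part, while the bodies $\varphi_L$ may refer to the literals of~$Q$ themselves — which is exactly why the fixed part must be taken to be empty. Everything else, notably the commutation and closure facts in~(a), is routine bookkeeping, and I expect that to be the only real obstacle.
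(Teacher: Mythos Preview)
Your proposal is correct and follows essentially the same two-step decomposition as the paper: first pass from~$\CalP$ to~$\Pdisj{\CalP}$, then apply the Completion Lemma (Lemma~\ref{lemma:reg:completion}) with the empty program to pass from~$\Pdisj{\CalP}$ to~$\WComp{\CalP}$. The only noteworthy difference is in step~(a): the paper simply cites item~(ii) of Proposition~6 in~\cite{LifschitzTT99} for the equivalence of~$\CalP$ and~$\Pdisj{\CalP}$ (restricting first to regular programs and handling the non-regular case by a separate reduction $\CalP \mapsto \reg{\CalP}$ at the end), whereas you give a direct, self-contained argument via the commutation identity $\reg{\Pdisj{\CalP}}^I = \Pdisj{\reg{\CalP}^I}$ and the observation that closure is preserved under collapsing same-head rules into a disjunction. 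Your route is slightly more elementary and handles non-regular programs in one pass; the paper's route is shorter on the page but relies on an external reference.
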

\begin{proof}
Assume that $\CalP$ is regular.
From  item~(ii) of Proposition~6 in~\cite{LifschitzTT99},
~$\CalP$ and $\Pdisj{\CalP}$ have the same answer sets.
Note that there is a unique rule with head $L$ for each literal $L \in (\Atoms \cup \sneg\Atoms)$ in $\Pdisj{\CalP}$.
Hence, $\WComp{\CalP}$ is obtained by replacing all occurrences of $\leftarrow$ in $\Pdisj{\CalP}$ by $\leftrightarrow$
and, from Lemma~\ref{lemma:reg:completion} (by taking $\CalP = \emptyset$), 
$\Pdisj{\CalP}$ and $\WComp{\CalP}$ have the same answer sets.
In case that $\CalP$ is not regular, we have that $I$ is an answer set of $\CalP$ iff $I$ is an answer set of 
$\reg{\CalP}$ iff $I$ is an answer set of $\WComp{\reg{\CalP}} = \reg{\WComp{\CalP}}$ iff $I$ is an answer set of $\WComp{\CalP}$.
\end{proof}

\begin{lemma}\label{lem:answer-set.Nlogic<->Llogic}
Given any normal nested logic program~$\CalP$, an interpretation $I$
is a model of $\tranN{\wComp{\nComp{\CalP}}}$ under \Nlogic 
if and only if $I$ is a model of $\tranL{\WComp{\CalP}}$.
\end{lemma}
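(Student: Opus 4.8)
The plan is to chain together the two translation theorems already available in the excerpt and then reconcile the two different completions that appear on the two sides. First I would invoke Theorem~\ref{th:equivLlogicNlogic.vakarelov}: an interpretation $I$ is a model of $\tranL{\WComp{\CalP}}$ under \Llogic iff $I$ is a model of $\tranN{\WComp{\CalP}}$ under \Nlogic, where $\tranN{\cdot}$ replaces each $\varphi \impliesL \psi$ by $(\varphi \impliesN \psi) \wedge (\sneg\psi \impliesN \sneg\varphi)$. So it suffices to show that $I$ is a model of $\tranN{\WComp{\CalP}}$ under \Nlogic iff $I$ is a model of $\tranN{\wComp{\nComp{\CalP}}}$ under \Nlogic. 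This is now a purely classical-logic-with-strong-negation statement, which should be more tractable.

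Next I would unfold both sides explicitly. Since $\WComp{\CalP}$ consists, for each defined literal $L$ with $\Def(\CalP,L) = \set{L \leftarrow \varphi_1, \dots, L\leftarrow\varphi_n}$, of the single biconditional $L \dimpliesL (\varphi_1 \vee \dots \vee \varphi_n)$, its \Nlogic-translation $\tranN{\WComp{\CalP}}$ is, for each such $L$ (writing $\varphi = \varphi_1 \vee \dots \vee \varphi_n$), the conjunction of $(L \impliesN \varphi)\wedge(\sneg\varphi\impliesN\sneg L)$ and $(\varphi\impliesN L)\wedge(\sneg L \impliesN \sneg\varphi)$. On the other side, $\nComp{\CalP}$ adds to $\CalP$ the rules $\sneg L \leftarrow \sneg\varphi$ for each $L \leftarrow \varphi$ in $\Pdisj{\CalP}$; taking the (\L ukasiewicz) weak completion of $\nComp{\CalP}$ and then translating into \Nlogic, I would need to collect, for each atom-or-negated-literal that is a head of $\nComp{\CalP}$, its biconditional and apply $\tranN{\cdot}$. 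The key combinatorial observation is that $L$ is defined in $\CalP$ iff $\sneg L$ is defined in $\nComp{\CalP}$ with body $\sneg\varphi$ (where $\varphi$ is the disjunctive body of $L$ in $\Pdisj{\CalP}$), and $L$ itself keeps the same body, so $\WComp{\nComp{\CalP}}$ has the two biconditionals $L \dimpliesL \varphi$ and $\sneg L \dimpliesL \sneg\varphi$ for each defined $L$, whose \Nlogic-translations are exactly the four implications displayed above (modulo $\sneg\sneg$-elimination via (\ref{vorob:atom}) and commuting disjunction past strong negation via (\ref{vorob:vee}), both of which are \Nlogic-equivalences). So the two theories are \Nlogic-equivalent literal by literal.

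I would then handle two bookkeeping points. One is that $\nComp{\CalP}$ may be non-regular — it contains formulas like $\sneg(\varphi_1 \vee \dots)$ and $\sneg\sneg L$ in rule bodies and heads — but we may freely pass to $\reg{\cdot}$ since $\reg{\cdot}$ preserves \Nlogic-equivalence (equivalences (\ref{vorob:atom})–(\ref{vorob:vee})) and, by the discussion after Definition~\ref{def:answer-set}, $\tranN{\reg{r}} \equivN \tranN{r}$; in particular $\WComp{\nComp{\CalP}}$ should be read up to this normalization, exactly as the paper does for $\reg{\WComp{\CalP}}$ in Proposition~\ref{prop:answersets.wc}. The other is the treatment of a literal $L$ that is a fact, i.e. $\varphi = \top$: then $\nComp{\CalP}$ contributes $\sneg L \leftarrow \sneg\top$, whose weak completion gives $\sneg L \dimpliesL \sneg\top$, i.e. $\sneg L \dimpliesL \bot$ — harmless since $L$ is already forced true — matching the fact that on the \Llogic side $L \dimpliesL \top$ already forces $L$ true and hence $\sneg L$ false. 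I would also note that the direction of the implication in the lemma's biconditional is automatic once the \Nlogic-equivalence of the two translated theories is established, so only one such equivalence needs to be argued.

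The main obstacle I expect is purely notational: carefully matching up, literal by literal, the disjunctive bodies produced by $\Pdisj{\cdot}$ inside $\WComp{\nComp{\CalP}}$ with the $(\sneg\psi\impliesN\sneg\varphi)$-halves produced by Vakarelov's $\tranN{\cdot}$ translation of $\WComp{\CalP}$, while keeping the $\reg{\cdot}$ normalization (the $\sneg\sneg$-eliminations and the De Morgan rewriting of $\sneg$ over $\vee$) under control. Once that correspondence is laid out, the equivalence is immediate from Theorem~\ref{th:equivLlogicNlogic.vakarelov} and the Vorob'ev equivalences (\ref{vorob:atom})–(\ref{vorob:vee}).
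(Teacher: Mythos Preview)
Your plan is essentially the paper's own argument: invoke Theorem~\ref{th:equivLlogicNlogic.vakarelov} to pass from $\tranL{\WComp{\CalP}}$ to an \Ntheory, unfold both sides literal by literal into four implications per defined head, and match them up using the Vorob'ev equivalences~(\ref{vorob:atom})--(\ref{vorob:vee}). Your treatment is in fact slightly more careful than the paper's about the regularization bookkeeping (the $\sneg\sneg$-elimination and De~Morgan steps), but the route is the same.
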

\begin{proof}
Note that
$\tranN{\wComp{\nComp{\CalP}}}$
has a pair of equivalences of the form
\begin{align}
A &\dimpliesN \varphi_1 \vee \cdots  \vee \varphi_n
  \label{eq:1:cor:answer-set.Nlogic}
\\
\sneg A &\dimpliesN \sneg\varphi_1 \wedge \cdots \wedge \sneg\varphi_n
  \label{eq:2:cor:answer-set.Nlogic}
\end{align}
for each $A \in \Atoms$ with
\mbox{$\Def(\CalP,L)  = \set{ A \leftarrow \varphi_1, \dotsc, A \leftarrow \varphi_n }\neq \emptyset$}.
On the other hand, we have that
$\tranL{\wComp{\CalP}}$
has an equivalences of the form
\begin{align}
 A &\dimpliesL \varphi_1 \vee \cdots \vee \varphi_n
  \label{eq:3:cor:answer-set.Nlogic}
\end{align}
for each atom $A \in \Atoms$ with
\mbox{$\Def(\CalP,L)   = \set{ A \leftarrow \varphi_1, \dotsc, A \leftarrow \varphi_n }\neq \emptyset$}.
By Theorem~\ref{th:equivLlogicNlogic.vakarelov},
the models of $\tranL{\wComp{\CalP}}$ under \Llogic
and the models of $\tranN{\tranL{\wComp{\CalP}}}$ under \Nlogic coincide.
Note now that, by definition, we have that~\eqref{eq:3:cor:answer-set.Nlogic} is equivalent to the following formula
\begin{align}
(A \impliesL \varphi_1 \vee \dotsc \vee \varphi_n) \wedge (\varphi_1 \vee \dotsc \vee \varphi_n \impliesL A)
\end{align}
Hence,
$\tranN{\tranL{\wComp{\CalP}}}$
contains a formula of the form $\psi^A_1 \wedge \psi^A_2 \wedge \psi^A_3 \wedge \psi^A_4$
for each atom $A \in \Atoms$ with
\mbox{$\Def(\CalP,A)  = \set{ A \leftarrow \varphi_1, \dotsc, A \leftarrow \varphi_n }\neq \emptyset$}
where
\begin{align}
 \psi^A_1 \eqdef& A \impliesN \varphi_1 \vee \cdots \vee \varphi_n
\\
\psi^A_2 \eqdef& \sneg A \impliesN \sneg(\varphi_1 \wedge \cdots \wedge \varphi_n)
\\
\psi^A_3 \eqdef& \varphi_1 \vee \cdots \vee \varphi_n \impliesN A
\\
\psi^A_4 \eqdef& \sneg(\varphi_1 \wedge \cdots \wedge \varphi_n) \impliesN \sneg A
\end{align}
Note that, by definition, $\psi^A_1 \wedge \psi^A_3$ is equivalent to~\eqref{eq:1:cor:answer-set.Nlogic}.
Besides $\psi^A_3 \wedge \psi^A_4$ can be equivalently rewritten as
\begin{align}
\sneg A \dimpliesN \sneg\,(\varphi_1 \vee \dotsc \vee \varphi_n)
\end{align}
that is equivalent to~\eqref{eq:2:cor:answer-set.Nlogic}, i.e.\
${\tranL{\wComp{\CalP}}}$
and
$\tranN{\wComp{\nComp{\CalP}}}$ have the same models.
\end{proof}

\begin{proposition}{\label{thmr:answer-set.wc}}
Given any normal nested program~$\CalP$
and interpretation $I$, if $I$ is an answer set of $\nComp{\CalP}$, then
$I$ is a model of $\tranL{\WComp{\CalP}}$.
\end{proposition}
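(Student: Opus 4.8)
The plan is to derive the statement by composing three facts that have already been established: Proposition~\ref{prop:answersets.wc} (a program and its weak completion have the same answer sets), Proposition~\ref{prop:answer.set->closedNmodel.basic} together with its extension to non-regular programs (every answer set of a program is, under \Nlogic, a model of the \Ntheory obtained by turning each $\leftarrow$ into $\impliesN$), and Lemma~\ref{lem:answer-set.Nlogic<->Llogic} (the models of $\tranN{\wComp{\nComp{\CalP}}}$ under \Nlogic are precisely the models of $\tranL{\WComp{\CalP}}$). Since all the real work has already been packaged into Lemma~\ref{lem:answer-set.Nlogic<->Llogic}, what remains is to connect ``$I$ is an answer set of $\nComp{\CalP}$'' to ``$I$ is a model of $\tranN{\wComp{\nComp{\CalP}}}$ under \Nlogic''.

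Starting from the hypothesis that $I$ is an answer set of $\nComp{\CalP}$, I would first apply Proposition~\ref{prop:answersets.wc} to the program $\nComp{\CalP}$ and conclude that $I$ is an answer set of $\wComp{\nComp{\CalP}}$. Next, I would use Proposition~\ref{prop:answer.set->closedNmodel.basic} to deduce that $I$ is a model of $\NwoP(\wComp{\nComp{\CalP}}) = \tranN{\wComp{\nComp{\CalP}}}$ under \Nlogic; since $\wComp{\nComp{\CalP}}$ need not be regular, the step invoked here is the version of Proposition~\ref{prop:answer.set->closedNmodel.basic} for arbitrary programs, which follows from the fact that $I$ is an answer set of a program $\CalQ$ iff it is an answer set of $\reg{\CalQ}$, together with $\NwoP(\reg{\CalQ}) = \reg{\NwoP(\CalQ)} \equivN \NwoP(\CalQ)$, exactly as observed in the proof of Corollary~\ref{cor:closedNmodel}. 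A final appeal to Lemma~\ref{lem:answer-set.Nlogic<->Llogic} then gives that $I$ is a model of $\tranL{\WComp{\CalP}}$, as required.

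The one point requiring care, and it is essentially bookkeeping, is that the definition completion does not literally produce a normal nested program: for a rule of $\Pdisj{\CalP}$ whose head is a negated atom $\sneg A$, the completion adds a rule with head $\sneg\sneg A$ rather than a literal. Since answer sets are defined through the regularization operator $\reg{\cdot}$ and $\sneg\sneg A \equivN A$, this is harmless, so Proposition~\ref{prop:answersets.wc} does apply to $\nComp{\CalP}$ and Proposition~\ref{prop:answer.set->closedNmodel.basic} does apply to $\wComp{\nComp{\CalP}}$ once one passes to the regular forms; I would simply record this observation at the start of the proof. Beyond keeping these regularizations straight, I do not anticipate any difficulty.
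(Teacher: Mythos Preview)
Your proposal is correct and follows exactly the same chain of implications as the paper's own proof: Proposition~\ref{prop:answersets.wc} to pass from $\nComp{\CalP}$ to $\wComp{\nComp{\CalP}}$, Proposition~\ref{prop:answer.set->closedNmodel.basic} to obtain a model of $\tranN{\wComp{\nComp{\CalP}}}$, and Lemma~\ref{lem:answer-set.Nlogic<->Llogic} to conclude. Your extra bookkeeping about the $\sneg\sneg A$ heads and regularization is a point the paper simply glosses over, but otherwise the arguments are identical.
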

\begin{proof}
From Lemma~\ref{lem:answer-set.Nlogic<->Llogic},
$I$ is a model of $\tranN{\wComp{\nComp{\CalP}}}$
iff $I$ is a model of $\tranL{\WComp{\CalP}}$.
From Proposition~\ref{prop:answersets.wc},
the answer sets of $\nComp{\CalP}$ and $\wComp{\nComp{\CalP}}$ are the same.
Furthermore, from Proposition~\ref{prop:answer.set->closedNmodel.basic}, the answer sets of
$\wComp{\nComp{\CalP}}$
are models of
$\tranN{\wComp{\nComp{\CalP}}}$ under \Nlogic.
Hence, the answer sets of $\nComp{\CalP}$ are \wcmodels of~$\CalP$.
\end{proof}

Given Lemma~\ref{lem:answer-set.Nlogic<->Llogic}, Proposition~\ref{prop:unique.answer.set<->closedleastNmodel},\ref{prop:answersets.wc} and~\ref{thmr:answer-set.wc}, we can now show 
how the definition completion of a program precisely characterizes the WCS in terms of the \ASS.

\begin{theorem}{\label{thmr:answer-set.wc.basic}}
Given any \wcsmpl program~$\CalP$ and interpretation~$I$,
the following two statements are equivalent:
\begin{enumerate}
\item $I$ is the unique \wcmodel of $\CalP$,
\item $I$ is the unique answer set of~$\nComp{\CalP}$.
\end{enumerate}
The following two statements are also equivalent:
\begin{enumerate}
\item $\CalP$ has no \wcmodel,
\item $\nComp{\CalP}$ has no answer set.
\end{enumerate}
\end{theorem}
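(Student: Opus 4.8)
The plan is to establish one set equality that subsumes both claimed equivalences: \emph{the answer sets of $\nComp{\CalP}$ are exactly the \wcmodels of $\CalP$}. Granting this, ``$I$ is the unique \wcmodel of $\CalP$'' and ``$I$ is the unique answer set of $\nComp{\CalP}$'' say the same thing, and so do ``$\CalP$ has no \wcmodel'' and ``$\nComp{\CalP}$ has no answer set''. (One could additionally note, from the result of~\cite{hk:2009a} quoted earlier, that a \wcsmpl program always has exactly one \wcmodel, so the second equivalence is about conditions that never hold; but this is not needed.) I would prove the set equality by chaining three identifications, pivoting on $\WComp{\nComp{\CalP}}$.

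First, since $\CalP$ is \wcsmpl, $\nComp{\CalP}$ is a basic normal nested program: its heads are literals $A$ or $\sneg A$ and its bodies are basic (those coming from $\CalP$ are, and the strong negation of a basic formula is again basic). Hence Proposition~\ref{prop:answersets.wc} applies and gives that $\nComp{\CalP}$ and $\WComp{\nComp{\CalP}}$ have the same answer sets. Pivoting on the weak completion here is essential: weak-completing is what introduces the ``only if'' halves of the definitions, and without them $\tranN{\nComp{\CalP}}$ has strictly more models under \Nlogic than $\tranL{\WComp{\CalP}}$ has under \Llogic, so a model-level match is available only after weak completion.

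Second, I would note that $\WComp{\nComp{\CalP}}$ is a \emph{basic} program — each $L \leftrightarrow \varphi$ in it unfolds into $L \leftarrow \varphi$ and $\varphi \leftarrow L$ with $L$ and $\varphi$ basic — so it contains no weak negation, and neither does its regular counterpart; therefore $\reg{\WComp{\nComp{\CalP}}}$ is unchanged by taking the reduct with respect to any interpretation, and Definition~\ref{def:answer-set} collapses to: $I$ is an answer set of $\WComp{\nComp{\CalP}}$ iff $I$ is a $\subseteq$-minimal closed interpretation under $\reg{\WComp{\nComp{\CalP}}}$. By Proposition~\ref{prop:closedNmodel} together with the identity $\tranN{\reg{\CalQ}} = \reg{\tranN{\CalQ}} \equivN \tranN{\CalQ}$ (valid for every program $\CalQ$), being closed is the same as being a model of $\tranN{\WComp{\nComp{\CalP}}}$ under \Nlogic; so the answer sets of $\WComp{\nComp{\CalP}}$ are precisely the $\subseteq$-minimal models of $\tranN{\WComp{\nComp{\CalP}}}$ under \Nlogic. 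Third, Lemma~\ref{lem:answer-set.Nlogic<->Llogic} identifies those with the models of $\tranL{\WComp{\CalP}}$ under \Llogic; since the knowledge order $\subseteq$ is the same in both settings, so do their $\subseteq$-minimal models, and these are by definition the \wcmodels of $\CalP$. Chaining the three identifications gives the set equality, and both equivalences follow.

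The hard part is not any computation but making this chain line up cleanly. Two points need care: (i) noticing that the model-level correspondence with $\tranL{\WComp{\CalP}}$ must be taken at $\WComp{\nComp{\CalP}}$ rather than at $\nComp{\CalP}$; and (ii) handling the fact that $\WComp{\nComp{\CalP}}$ may have rules with compound heads, so that one cannot apply Proposition~\ref{prop:unique.answer.set<->closedleastNmodel} to it off the shelf and must instead unwind Definition~\ref{def:answer-set} and Proposition~\ref{prop:closedNmodel} directly — carrying along the harmless passage to a regular counterpart — to see that a weak-negation-free program has as its answer sets exactly the $\subseteq$-minimal models of its associated theory under \Nlogic. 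Everything substantive about the two three-valued logics is already carried by Theorem~\ref{th:equivLlogicNlogic.vakarelov} through Lemma~\ref{lem:answer-set.Nlogic<->Llogic}, and the answer-set/weak-completion interplay by Proposition~\ref{prop:answersets.wc}, so once (i) and (ii) are settled the rest is routine. Alternatively, one may invoke Propositions~\ref{prop:unique.answer.set<->closedleastNmodel} and~\ref{thmr:answer-set.wc} more directly — they give, respectively, that $\nComp{\CalP}$ has at most one answer set and that any answer set of $\nComp{\CalP}$ is a model of $\tranL{\WComp{\CalP}}$ — and combine them with Lemma~\ref{lem:answer-set.Nlogic<->Llogic} and a short minimality argument.
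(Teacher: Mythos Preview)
Your proof is correct and follows the same overall route as the paper: both arguments pivot on $\WComp{\nComp{\CalP}}$, use Lemma~\ref{lem:answer-set.Nlogic<->Llogic} to identify the \Nlogic models of $\tranN{\WComp{\nComp{\CalP}}}$ with the \Llogic models of $\tranL{\WComp{\CalP}}$, and use Proposition~\ref{prop:answersets.wc} to move between $\nComp{\CalP}$ and its weak completion.

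The execution differs slightly. The paper argues by case analysis on whether $\tranN{\WComp{\nComp{\CalP}}}$ has a model, then invokes Propositions~\ref{prop:unique.answer.set<->closedleastNmodel} and~\ref{thmr:answer-set.wc} --- exactly the ``alternative'' you sketch at the end. Your primary argument instead proves the full set equality $\{\text{answer sets of }\nComp{\CalP}\} = \{\text{\wcmodels of }\CalP\}$ directly, by observing that $\WComp{\nComp{\CalP}}$ is basic and hence its answer sets are just the $\subseteq$-minimal \Nlogic models of $\tranN{\WComp{\nComp{\CalP}}}$ via Definition~\ref{def:answer-set} and Proposition~\ref{prop:closedNmodel}. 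This buys you a cleaner statement that subsumes both equivalences at once, and it handles point~(ii) explicitly: the paper applies Proposition~\ref{prop:unique.answer.set<->closedleastNmodel} to conclude that $\tranN{\WComp{\nComp{\CalP}}}$ has a $\subseteq$-least model, even though $\WComp{\nComp{\CalP}}$ is not literally normal nested (rules $\varphi \leftarrow L$ have compound heads), leaving that step somewhat implicit. Your unwinding of the definition closes that gap.
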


\begin{proof}
Assume that $\tranN{\wComp{\nComp{\CalP}}}$ has no model.
From Lemma~\ref{lem:answer-set.Nlogic<->Llogic},
it follows that
$\tranL{\WComp{\CalP}}$
has no model either and, thus, there is no \wcmodel of $\CalP$.
Besides, from Proposition~\ref{thmr:answer-set.wc},
the lack of model of $\tranL{\WComp{\CalP}}$ also implies that~$\nComp{\CalP}$ has no answer set.
Otherwise, $\tranN{\wComp{\nComp{\CalP}}}$ has a model
and, since is a \wcsmpl program and thus basic, from
Proposition~\ref{prop:unique.answer.set<->closedleastNmodel},
we get that there is an interpretation $I$ which the \mbox{$\subseteq$-least} model of $\tranN{\wComp{\nComp{\CalP}}}$ and, thus, the unique answer set of $\wComp{\nComp{\CalP}}$.
From Proposition~\ref{prop:answersets.wc} this implies that
$I$ is the unique answer set of $\nComp{\CalP}$.
Furthermore, from Lemma~\ref{lem:answer-set.Nlogic<->Llogic}, this also implies that
it is the $\subseteq$-least model of $\tranL{\WComp{\CalP}}$
and, thus, the unique \wcmodel of~$\CalP$.
\end{proof}


\section{Conclusions and Future Work}

We have shown how logic programs under the Weak Completion Semantics can be translated into logic programs under the Answer Set Semantics by using the \emph{definition completion}.
This completion adds rules supporting the strong negation of a defined atom whenever all the bodies of all rules defining it are false.
This transformation has been illustrated by two examples of Byrne's suppression task.

This result allows us to use all the knowledge representation features of Answer Set Programming, including \emph{default negation}, in combination with this completion
and opens two interesting future possibilities: On the one hand, in~\cite{dhp:2017}, logic programs under the Weak Completion Semantics were extended with a context operator to capture negation as failure.
Hence, an immediate question is whether these \emph{contextual logic programs} can also be translated into logic programs under the Answer Set Semantics by using default negation.
On the other hand, it would be interesting to investigate how the twelve cases of the suppression
task could be represented 
by means of default negation or the context operator in order to ensure \emph{elaboration tolerance}~\cite{mccarthy1998elaboration}.

Another interesting observation is that the proof of the correspondence between these two semantics relies on the use of strong negation as a connective in its own right, as opposed to the usual convention of considering that strong negation can only be applied to atoms.
This extension was first considered by David Pearce in~\cite{pearce:1997}. It would be worth to investigate
how the usual properties of the Answer Set Semantics can be extended to this class of programs and how its use can ease other knowledge representation problems.

\paragraph{Acknowledgements.}
We are thankful to David Pearce for pointing to Vakarelov's work on the relation between {\L}ukasiewicz logic and classical logic with strong negation.

\bibliographystyle{splncs03}
\bibliography{bib}
\end{document}